\newcommand{\p}{{\mathbb{P}}}
\newcommand{\degen}{{\text{degen}}}
\newcommand{\Oh}{{\mathcal{O}}}
\newcommand{\R}{\ensuremath{\mathbb{R}}}
\newcommand{\N}{\ensuremath{\mathbb{N}}}
\newcommand{\relu}{\ensuremath{\sigma}}
\newtheorem{theorem}{Theorem}[section]
\newtheorem{lemma}[theorem]{Lemma}
\newtheorem{corollary}{Corollary}[theorem]
\theoremstyle{definition}
\theoremstyle{remark}
\newtheorem*{remark}{Remark}
\title{Gradient Flow Convergence Guarantee for General Neural Network Architectures}
\author{
  Yash Jakhmola \\
  \texttt{yj20ms028@iiserkol.ac.in}
}
\begin{document}
\maketitle

\begin{abstract}
A key challenge in modern deep learning theory is to explain the remarkable success of gradient-based optimization methods when training large-scale, complex deep neural networks. Though linear convergence of such methods has been proved for a handful of specific architectures, a united theory still evades researchers. This article presents a unified proof for linear convergence of continuous gradient descent, also called gradient flow, while training any neural network with piecewise non-zero polynomial activations or ReLU, sigmoid activations. Our primary contribution is a single, general theorem that not only covers architectures for which this result was previously unknown but also consolidates existing results under weaker assumptions. While our focus is theoretical and our results are only exact in the infinitesimal step size limit, we nevertheless find excellent empirical agreement between the predictions of our result and those of the practical step-size gradient descent method.
\end{abstract}
\keywords{gradient flow \and linear convergence \and optimization}

\section{Introduction}
\label{sec:intro}
Behind training any successful deep learning model is the hope of achieving a low loss. In this article, we turn that hope into certainty, proving that gradient flow based training does not just succeed, but it converges exponentially fast, thus proving that training procedures need not be a gamble.

Understanding why gradient-based optimization methods are so effective at training large, complex deep neural networks is a central question in modern machine learning theory. Despite the non-convex nature of the loss landscape, these methods can find solutions with low training error. This empirical success is well-known and is a reason for the success of neural networks in various real-life tasks. 
However, finding theoretical guarantees that training via gradient methods converges, especially at exponential rates\footnote{Linear convergence of loss and exponential decay of loss mean the same thing. This is common terminology used in optimization - linear convergence of loss means that the loss $L$ decreases linearly, ie. $L_{t}\leq \alpha L_{t-1}$ for some $\alpha\geq0$. Iterating backwards, we get $L_{t}\leq \alpha^t L_0$.} and across a broad class of architectures, remains an active area of research. Such guarantees are not only of theoretical interest but also provide valuable insights into why neural networks generalize well and how to design better models.

As we discuss below, a number of works have studied linear convergence of gradient methods for specific network architectures, initialization schemes or under strong over-parameterization assumptions. However, a unifying framework that applies across diverse architectures and does not rely on restrictive assumptions has been missing.

In this article, we close that gap by proving that training neural networks using continuous gradient descent will almost always lead to an exponential decay of the training loss, ie. we prove that at time $t$ of applying continuous gradient descent, loss is $\Oh(e^{-t})$. An informal version of our result is as follows.

\begin{theorem}[Informal theorem]
  Let $P$ be the number of trainable parameters of a neural network, $n$ be the training dataset size and $M$ be the output dimension. If the network is over-parametrized, ie. $P\geq nM$ and the initialization, input data distributions are absolutely continuous, then linear convergence of loss holds with probability 1.
\end{theorem}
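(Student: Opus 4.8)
The plan is to run the tangent-kernel (NTK-style) argument, but to obtain positive definiteness of the kernel from a genericity/dimension count based on $P\ge nM$ rather than from width over-parametrization. Collect the network outputs on the $n$ training points into one map $F\colon\R^P\to\R^{nM}$, so the square loss is $L(\theta)=\tfrac12\norm{F(\theta)-y}^2$ and gradient flow reads $\dot\theta=-\nabla L(\theta)=-J(\theta)^\top r(\theta)$ with $J(\theta)=DF(\theta)\in\R^{nM\times P}$ and $r(\theta)=F(\theta)-y$. Along the flow $\dot L=-\norm{\nabla L}^2=-\,r(\theta)^\top K(\theta)\,r(\theta)$, where $K(\theta)=J(\theta)J(\theta)^\top\in\R^{nM\times nM}$ is the tangent Gram matrix. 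Hence if $\lambda_{\min}\!\big(K(\theta_t)\big)\ge\mu>0$ for all $t$ along the trajectory, then $\dot L\le-2\mu L$, so $L(t)\le e^{-2\mu t}L(0)$, which is the asserted linear convergence (and it forces $L(t)\to0$). Everything reduces to a uniform-in-$t$ lower bound on $\lambda_{\min}(K)$ along the flow.

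First I would show $K(\theta)\succ0$ for Lebesgue-almost every $\theta$. For piecewise polynomial activations, $F$ is piecewise polynomial in $\theta$ (the pieces separated by a null set), so $\det K$ is locally polynomial off a null set; for sigmoid activations $F$ and hence $\det K$ are real-analytic; ReLU is the piecewise-linear instance of the first case. Since a polynomial (resp.\ real-analytic function) is either identically zero or nonzero off a null set, it suffices to exhibit a \emph{single} $\theta$ with $J(\theta)$ of full row rank $nM$. Both hypotheses enter here: $P\ge nM$ makes full row rank dimensionally possible, and absolute continuity of the input distribution yields (a.s.) distinct, generic training inputs, from which one builds such a $\theta$ --- for instance choosing the hidden weights so the per-example feature maps are linearly independent and letting the output layer supply $M$ independent directions per example; the \emph{non-zero polynomial} assumption on each activation piece (and analyticity for sigmoid) is exactly what keeps this construction from being obstructed. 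With absolute continuity of the initialization, $\lambda_{\min}(K(\theta_0))=:\lambda_0>0$ with probability $1$.

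The remaining step, which I expect to be the main obstacle, is upgrading the a.s.\ bound at $\theta_0$ to the uniform bound $\lambda_{\min}(K(\theta_t))\ge\mu>0$, since a priori the flow could drift into the null set $B=\{\theta:\det K(\theta)=0\}$ on which the scheme collapses. I would first try a bootstrap: monotonicity of $L$ confines the trajectory to $S=\{L\le L(\theta_0)\}$, on which $\norm{K}_{\mathrm{op}}\le\Lambda$ and $K$ is Lipschitz; tentatively assuming $\lambda_{\min}(K)\ge\lambda_0/2$ gives $\sqrt{L(t)}\le e^{-(\lambda_0/2)t}\sqrt{L(0)}$ and hence finite path length $\int_0^\infty\norm{\dot\theta}\,dt=\int_0^\infty\norm{\nabla L}\,dt\le 2\sqrt{2\Lambda L(0)}/\lambda_0$, and if that length fits inside the Lipschitz radius on which $\lambda_{\min}(K)\ge\lambda_0/2$ around $\theta_0$, the assumption is vindicated. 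When that radius is too small, I would instead use that the time-$t$ flow map $\Phi_t$ is a local diffeomorphism, so $\Phi_t^{-1}(B)$ is null for each $t$ and, $B$ being closed, for a.e.\ $\theta_0$ the trajectory meets $B$ only on a null set of times --- enough to keep $\int_0^T\lambda_{\min}(K(\theta_t))\,dt$ growing linearly and hence $L(T)=L(0)\exp\!\big(-\!\int_0^T\norm{\nabla L}^2/L\,dt\big)$ decaying linearly; real-analyticity of $L$ and the {\L}ojasiewicz gradient inequality supply convergence of $\theta_t$ to a limit point and exclude stalling. The genuinely delicate point --- where I would concentrate the work, and which may force a mild extra nondegeneracy assumption or a careful semialgebraic/analytic stratification of $B$ --- is quantifying how close the trajectory can approach $B$, i.e.\ turning positivity of $\lambda_{\min}(K)$ at a generic point into the uniform-in-$t$ bound; for ReLU one must separately make sense of the flow across the nonsmooth locus (Clarke subgradient / a.e.\ differentiability).
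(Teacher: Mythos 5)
Your plan is essentially the same as the paper's: track the predictions through the tangent Gram matrix $K(\theta)=J(\theta)J(\theta)^\top$, establish its positive definiteness for generic parameters via the piecewise-polynomial/analytic structure of $\det K$, and use the fact that gradient flow is a piecewise diffeomorphism to carry the genericity along the trajectory. Where you differ is in the technical treatment of the two delicate steps, and in one place your caution is better calibrated than the paper's argument.

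First, non-vanishing of $\det K$. You propose to exhibit an explicit $\theta$ with $J(\theta)$ of full row rank and invoke the polynomial/analytic dichotomy. The paper instead asserts non-vanishing directly from the fact that the activation is a piecewise \emph{nonzero} polynomial, citing Caron--Traynor; it never exhibits a witness. Your explicit construction is the safer route, since the dichotomy argument silently assumes the polynomial is not identically zero, which is exactly what a witness would certify.

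Second, and more importantly, the uniform-in-$t$ lower bound. You flag this as the main obstacle and propose two patches (a bootstrap using the finite path length of gradient flow, or analyticity plus a {\L}ojasiewicz inequality). The paper handles it by applying the extreme value theorem on $[0,T]$ to obtain a minimizer $t_0$ and then invoking its Lemma \ref{lemma:pos_def} to conclude $\lambda_{\min}(G(t_0))>0$. But Lemma \ref{lemma:pos_def} and its proof establish only that $\p(\theta(t)\in\theta_\degen)=0$ for each \emph{fixed} $t$; the time $t_0$ produced by the extreme value theorem depends on the sample path, so the lemma cannot be applied at $t=t_0$ without first upgrading to the stronger statement ``a.s., $\theta(t)\notin\theta_\degen$ for all $t\in[0,T]$ simultaneously.'' That upgrade is an uncountable intersection that does not follow from the pointwise claim, and the paper's appendix Theorem \ref{thm:GF_misses_codim2} only covers subsets of codimension at least $2$, whereas $\theta_\degen$ is a priori only codimension $1$. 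So you are right to concentrate the work here; the gap you anticipate is also present in the paper, not just in your proposal.

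Finally, you handle sigmoid directly via real-analyticity and ReLU as the piecewise-linear case, while the paper uses uniform approximation (Jackson's inequality for sigmoid, leaky ReLU $\to$ ReLU). Your route is more direct and avoids the rather delicate $\limsup$ exchange in the paper's Corollaries \ref{corr:relu}--\ref{corr:sigmoid}, but you would still need to justify the a.e.\ differentiability and the flow construction across the ReLU kink set, which the paper does via its piecewise-Picard argument in Lemma \ref{lemma:GF_welldefined}. As written, your proposal is an honest plan rather than a complete proof, and the piece you single out as ``genuinely delicate'' is exactly the piece where the paper's own argument is not airtight.
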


\begin{figure}
  \centering 
  \includegraphics[width=0.6\linewidth]{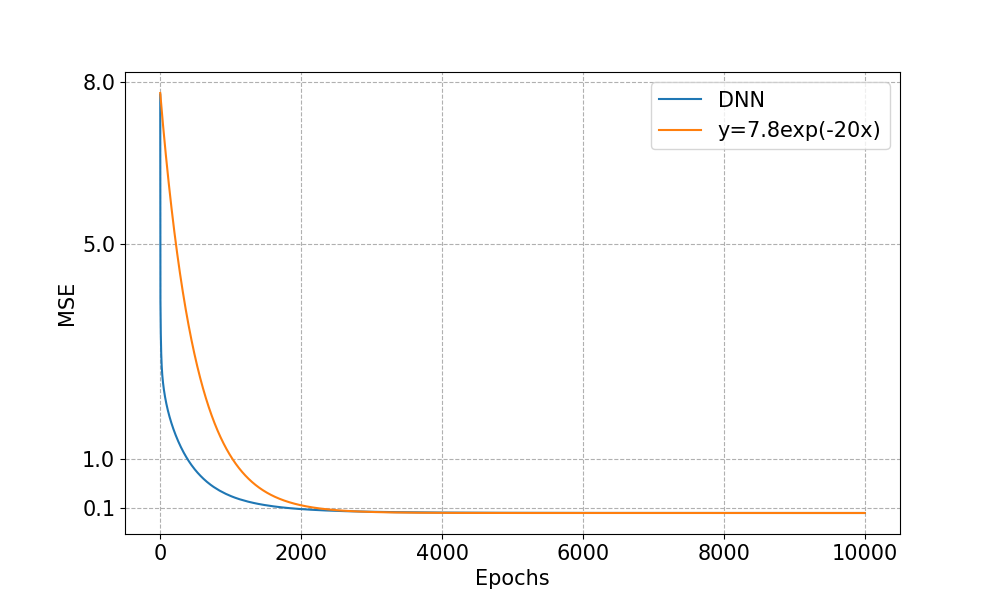}
  \caption{The error curve for a deep neural network trained on random data is bounded by an exponential decay. Thus, the error here is $\Oh(e^{-t})$.}
  \label{fig:fig1}
\end{figure}
In other words, as long as the dataset and initialization are not pathological, and the network has sufficient capacity to represent the task, GF (gradient flow) leads inevitably to exponential loss decay. A larger output dimension or larger sample size means the learning is potentially much more complex, and thus needs more parameters to be learnt. This conclusion is aligned with the well known empirical claim that over-paramterized networks perform better.

To do so, we track the training dynamics using the NTK, as done by previous works \citep{du_ICLR19,du_ICML19} and use properties of ODEs \citep{chatterjee_arx22}. 

Our novel contribution lies in realizing that the set of initializations for which the NTK has a zero eigenvalue has zero probability with respect to initialization and input data distributions, and that this property is preserved throughout training. Once positive-definiteness of the NTK has been established, routine calculations lead us to an exponential decay of the loss.

Compared to existing work, our proof is both more general and simpler. The argument avoids lengthy technical calculations, requiring only basic probability and ODE theory. Yet it yields a result that appears to be absent from the literature.

One subtelty arises from activation functions. Our proof directly applies to piecewise nonzero polynomial activations, which excludes the widely used ReLU and sigmoid functions. To solve this problem, we use a piecewise non-zero polynomial activation that uniformly lower bounds ReLU/sigmoid and use a limiting argument to show that if this activation converges uniformly to ReLU/sigmoid, then exponential decay holds for ReLU/sigmoid as well.

\paragraph{Applicability of our work}
Theorem \ref{thm:main} holds for a general function structure composed of polynomial layers and a piecewise nonzero polynomial activation function, which covers many popular architectures like DNNs, ResNets \citep{he_CVPR16-resent}, CNNs \citep{krizhevsky_NIPS12}, ConvResNets \citep{du_ICML19}, GCNs \citep{kipf_ICLR17}, ChebConvNets \citep{defferrard_NIPS16}, GraphConvNets \citep{morris_AAAI19-gnn}, U-Nets \citep{ronneberger_med15-unet}, MobileNets \citep{howard_arx17} and SiameseNets \citep{taigman_CVPR14-siamese} with activations like leaky ReLU \citep{maas_ICML13-leaky} or parametric ReLU \citep{he_ICCV15-prelu+init}, just to name a few. Note that corollaries \ref{corr:relu}, \ref{corr:sigmoid} extends our conclusion to ReLU, sigmoid activations as well. Moreover, most common initializations and input data are sampled from an absolutely continuous distribution. Thus, our work holds for most of the practically relevant architectures, input data and initializations.

\subsection{Comparison with Previous Works}
\label{subsec:comp_prev_works}

The seminal work by \cite{jacot_NIPS18} proved that the training of DNNs with a smooth activation function in the infinite width limit for a finite time with gradient descent can be characterized by a kernel. Our proof technique relies on the same kernel matrix used by them.

Linear convergence of training error has been known under simple settings. Some works that use a trajectory-based analysis of the algorithm dynamics are compared in table \ref{table} along important assumptions and conclusions. It is to be noted that despite ease of analysis with GF, most of the previous works do not apply to GF. We have closed this gap with this work.

\cite{chatterjee_arx22} is one of the closest to our work. It proves that for DNNs with twice differentiable activations trained on a linearly independent input training dataset with either GF or GD with small step size, convergence is linear. However, their assumptions are yet again much more restrictive in comparison to ours.

\begin{table}
\centering
\caption{Assumptions and conclusions of previous works. `w' stands for architectures without biases, `s' for shallow, `L' for layers, $L$ for width of the network, $n$ for training dataset size, $T$ for training time, $E,V$ for the number of edges and vertices of the graph used for GNN, D for DNN, C for CNN, R for ResNet, CR for CNN-ResNet, G for GNN, pl for polylog, p for poly and e for error. We write etc. whenever there are too many specific conditions (or cases) that need to be satisfied (or can be applied to), and instead just mention the most restrictive (or the most common) one. Specific refers to a condition that is fairly specific and cannot be condensed into one word.}
\label{table}
\begin{tabular}{|c|c|c|c|c|}
  \hline
  \textbf{Paper} & \textbf{GD or GF} & \textbf{Architectures} & \textbf{Activation} & \textbf{Rate} \\
  \hline
  \citet{allen-zhu_NIPS19} & GD & D (3L,w) & ReLU & $1/\sqrt{T}$ \\
  \citet{allen-zhu_ICML19} & GD & D,R,C (w) & ReLU & $e^{-T}$ \\
  \citet{du_ICML19} & GD & D,R,CR (w) & Analytic, etc. &$e^{-T}$ \\ 
  \citet{nguyen_NIPS20} & GD & D (w) & Smooth, sub-lin. & $e^{-T}$ \\
  \citet{zou_ML20} & GD & D (w) & ReLU & $1/T$ \\
  \citet{awasthi_NIPS21} & GD & G (s) & ReLU & $e^{-T}$ \\
  \citet{chen_ICLR21} & GD & D (w) & ReLU & $1/T$ \\
  \citet{liu_ACHA22} & GD & D (w) & Twice diffn. & $e^{-T}$ \\
  \citet{gopalani_IMA25} & GD & D (s,w) & Softplus & $e^{-T}$ \\
  \citet{du_ICLR19} & GF\&GD & D (s,w) & ReLU & $e^{-T}$ \\ 
  \citet{chatterjee_arx22} & GF\&GD & D & $C^2$ & $e^{-T}$ \\
  \citet{boursier_NIPS22} & GF & D (s,w) & ReLU & $e^{-T}$ \\
  This paper & GF & Any & ReLU, $\sigma$, etc. & $e^{-T}$ \\
  \hline
   & \textbf{Hidden size} & \textbf{Initialization} & \textbf{Training Data} & \textbf{Out.dim.} \\
  \hline
  \citet{allen-zhu_NIPS19} & $\Oh(1/\text{e})$ & Normal & Normalized & Multi\\
  \citet{allen-zhu_ICML19} & p$(L,n)$ & Normal & Separable & Multi\\
  \citet{du_ICML19} & $\Oh(n^4)$ & Arbitrary & Normalized & Single \\
  \citet{nguyen_NIPS20} & n & Normal, etc. & Almost any & Multi \\
  \citet{zou_ML20} & $\Omega(n^{14})$ & Normal & Separability & Single \\
  \citet{awasthi_NIPS21} & Arbitrary & Normal & $E=o(\sqrt{V})$ & Multi \\
  \citet{chen_ICLR21} & pl$(1/\text{e})$ & Normal & n=$\Oh(1/\text{e})$p$(L)$ & Single\\
  \citet{liu_ACHA22} & $\Oh(n)$ & Normal & Arbitrary & Multi \\
  \citet{gopalani_IMA25} & Arbitrary & Arbitrary & Arbitrary & Single\\
  \citet{du_ICLR19} & $\Oh(n^6)$ & Normal & Non-parallel & Single\\
  \citet{chatterjee_arx22} & Arbitrary & Arbitrary & Linear ind. & Single \\
  \citet{boursier_NIPS22} & Specific & Nonzero & orthogonality & Multi\\
  This paper & $P\geq nM$ & Almost any & Almost any & Multi \\
  \hline
\end{tabular}
\end{table}
The works by \cite{min_ICML23,bah_IMA22,tarmoun_ICML21,xu_TMLR25} proved linear convergence of GF/GD, but for linear networks (ie. networks with linear activations, essentially making it linear regression). \cite{hutzenthaler_arx23} provides a detailed analysis of convergence of both GF and GD, but for constant target functions.

Other approaches to finding convergence properties include landscape-based analysis \citep{arora_ICML22-L,jin_ICML17-L}, mean-field approximation based analysis \citep{chen_arx22-MF,ding_JMLR22-MF}, using optimal transport theory \citep{chizat_NIPS18-OT,khamis_IEEE24-OT} and studying models in the richer albeit challenging `feature learning regime' \citep{yang2020, kunin2024}.

\paragraph{Why Gradient Flow ?}
We focus on gradient flow to not have to keep track of discrete step sizes, which complicates the analysis and pulls attention away from the more important insights. Moreover, \cite{elkabetz_NIPS21} proves that GF is close to GD with small step sizes for DNNs with homogeneous activations. It is to be noted that even though analysis using GF is usually less cumbersome than GD, the proof in this paper is not anywhere in the literature - despite the simplicity of its arguments. Previous works do talk about the convergence of GF, but they are again limited to either shallow networks \citep{du_ICLR19,boursier_NIPS22,gopalani_IMA25,tarmoun_ICML21} or deep networks with smooth \citep{chatterjee_arx22} or linear activations \citep{min_ICML23,bah_IMA22,xu_TMLR25}. As stated before, we nevertheless back up our claims empirically by training with GD. 

\paragraph{Our Novelty}
The major novelty of our result is that it holds for a large variety of network architectures. Previous results have often been found for bias-less architectures  \citep{du_ICLR19,allen-zhu_ICML19,liu_ACHA22}, one-dimensional \citep{gopalani_IMA25,du_ICML19} or constant \citep{hutzenthaler_arx23} targets, very high over-parameterization \citep{du_ICML19,allen-zhu_NIPS19,chen_ICLR21}, a specific initialization of weights \citep{liu_ACHA22,zou_ML20,nguyen_NIPS20} or strong assumptions on input data like orthogonality \citep{boursier_NIPS22} or linear independence \citep{chatterjee_arx22}. Such convergence guarantees have also been found for other architectures like GNNs \citep{awasthi_NIPS21}, CNNs \citep{allen-zhu_ICML19} and ResNets \citep{du_ICML19}. However, there has not been one work that combines all these results (and more) into a single, easy to follow theorem.

Note that though there have been guarantees for ReLU or sigmoid networks, 
there has been no work regarding leaky ReLU or parametric ReLU activated networks in the past. Our result holds for, as discussed before, a plethora of networks. We thus keep our experiments in section \ref{sec:exp} limited to the more popular architectures. 

To summarise, our \textbf{major contributions} are as follows:
\begin{itemize}
  \item We prove linear convergence of gradient flow for a wide variety of architectures. Previous analyses have worked out linear convergence on a case-by-case basis \citep{awasthi_NIPS21,bah_IMA22}. We prove a general theorem that works for any general function structure composed of polynomial layers and any piecewise non-zero polynomial activation, or ReLU/sigmoid activation.
  \item We require initialization to be from any absolutely continuous distribution. Previous analyses usually have only considered normal distributions \citep{du_ICML19,nguyen_NIPS20}. 
  \item We require overparametrization of only at least $nM$. Previous works have required over-parameterization to be of much larger magnitudes \citep{du_ICLR19,chen_ICLR21}.
\end{itemize}

Despite being a rather short and concise proof, our contributions sharpen, generalize, and in some cases correct limitations in prior work by covering a wide range of architectures with weaker assumptions on activation functions, initialization, training data and over-parameterization. 

\section{Setup and Preliminaries}
\label{sec:preliminaries}

Consider a continuous function $f:\R^N\times\R^P\to\R^M$ defined as follows for some $N,P,M\in\N$:
\begin{equation}\label{eqn:function_f}
  f(X,\theta):=g_L(\relu(\ldots g_2(\relu(g_1(X,\theta))) \ldots))
\end{equation}
for all inputs $X\in\R^N$ and parametrizations $\theta\in\R^P$, where $g_i:\R^{N_{i-1}}\times\R^P\to\R^{N_i}$ are polynomials (in each component) for all $i\in[L]$ with $N_0=N,N_L=M$ and $N_1,\ldots,N_{L-1}\in\N$ for some $L\in\N$. $\relu:\R\to\R$ is a continuous, piecewise polynomial (with finitely many closed pieces) which is zero at finitely many values only. Note that $\relu$ applied to a vector means that it is applied component-wise. 

We shall prove our result for this function. We do so because most of the neural network architectures can be represented by this function. Examples will be given in corollary \ref{corr:arch}. 

We intend to find that parametrization of $f$ that minimizes its error over a training dataset $(X_1,y_1),\ldots,(X_n,y_n)\in\R^N\times\R^M$, where $L:\R^P\to[0,\infty)$ is the error, defined by $L(\theta):=\frac{1}{2}\sum_{j=1}^n ||f(X_j,\theta)-y_j||^2$
for all $\theta\in\R^P$.

To do so, we utilize gradient flow. Start from some initial parameters $\theta_0$ and input data $X_1,\ldots,X_n$ sampled from some arbitrary absolutely continuous probability distributions (with respect to the Lebesgue measure). Then, for some $T>0$, solve the following initial-value problem till $t=T$ to get the final parametrization $\theta(T)$.
\begin{align}\label{eqn:GF}
  \dv{\theta(t)}{t} &= -\pdv{L(\theta(t))}{\theta(t)} \\
  \theta(0)&=\theta_0 \nonumber
\end{align}
We shall often write $\nabla L(\theta(t))$ in place of $\pdv{L(\theta(t))}{\theta(t)}$. Note that the $f$ need not be differentiable with respect to $\theta$ everywhere, but it is differentiable almost everywhere, due to the definition of $\relu$. Note that this is will not be an issue, since the ODE theorems are applied piecewise and the training dynamics are not affected by almost everywhere differentiability.

The following lemma will prove that this ODE has a unique continuous and piecewise differentiable solution that can be extended till any $T>0$. 

\begin{lemma}[Existence and invertibility of GF solution curve]\label{lemma:GF_welldefined}
  The GF initial-value problem (\ref{eqn:GF}) has a unique, continuous and piecewise differentiable solution that can be extended till any $T>0$, for almost all initializations and input data. Moreover, the solution flow is a diffeomorphism over the parameter space minus a measure zero set. 
\end{lemma}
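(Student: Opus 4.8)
The plan is to patch together local Picard--Lindel\"of solutions over the finitely many regions on which the vector field $-\nabla L$ is polynomial, to rule out finite-time blow-up via an elementary energy identity, and then to read off the diffeomorphism property from smooth dependence on initial conditions. For the first part, partition $\R^P$ according to which closed piece of $\relu$ each coordinate fed into an activation lies in; since $\relu$ has finitely many closed pieces and each $g_i$ is polynomial in its coordinates, this membership is expressible, layer by layer, through finitely many polynomial inequalities in $\theta$, so there are finitely many (semialgebraic) cells $R_1,\dots,R_m$ covering $\R^P$, and on the interior of each one $f(X_j,\cdot)$ is a polynomial in $\theta$. Consequently $L$ is a polynomial on each $\operatorname{int}R_a$, so $\nabla L$ is $C^\infty$ (hence locally Lipschitz) there, and the ``bad set'' $B:=\R^P\setminus\bigcup_a\operatorname{int}R_a$ is contained in a finite union of proper algebraic subsets, hence is Lebesgue-null --- and likewise null in the joint $(\theta,X_1,\dots,X_n)$-space, which is exactly what supplies the ``almost all initializations and input data'' qualifier. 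Picard--Lindel\"of then gives a unique maximal solution of (\ref{eqn:GF}) for as long as the trajectory stays inside one cell; when it first reaches $\partial R_a$ --- necessarily at a point of $B$ --- we restart the ODE in the interior of an adjacent cell. The concatenated curve is continuous at the junctions and $C^\infty$ strictly between consecutive junction times (the asserted ``continuous and piecewise differentiable'' regularity), and it is unique because two solutions must agree up to the first junction and then, inductively, on each subsequent piece.

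\emph{Global existence.} Along any such solution, $\dv{L(\theta(t))}{t}=\nabla L(\theta(t))^{\top}\dot\theta(t)=-\norm{\nabla L(\theta(t))}^{2}\le 0$, so $L$ is non-increasing and $\int_0^t\norm{\dot\theta(s)}^2\,ds=L(\theta_0)-L(\theta(t))\le L(\theta_0)$. Cauchy--Schwarz then gives $\norm{\theta(t)-\theta_0}\le\int_0^t\norm{\dot\theta(s)}\,ds\le\sqrt{t\,L(\theta_0)}$, so on any interval $[0,T)$ the trajectory stays inside a fixed compact ball; a solution confined to a compact set cannot have a finite maximal existence time, so it extends to every $T>0$. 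The one genuine subtlety --- and the step I expect to be the main obstacle --- is that the trajectory could a priori meet $B$ infinitely often with crossing times accumulating before some finite time, or could slide along $B$ over a positive-length interval, so that the ``finite concatenation of smooth pieces'' picture fails. I would rule this out by analyticity: on each $\operatorname{int}R_a$ the field $-\nabla L$ is real-analytic, hence so is the trajectory there, so if an analytic coordinate $t\mapsto q(\theta(t))$ (with $q$ a polynomial defining a boundary component of a cell) had accumulating zeros it would vanish identically, forcing the trajectory into that lower-dimensional component --- a degenerate case that is excluded for a.e.\ initialization (alternatively one invokes o-minimality/tameness of trajectories of polynomial ODEs, or simply absorbs such pathological initializations into the exceptional null set). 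Hence on every bounded time window the solution is a finite concatenation of analytic arcs meeting $B$ transversally.

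\emph{Diffeomorphism of the flow.} Write $\Phi_t$ for the time-$t$ flow map. On each cell, smooth dependence of ODE solutions on initial conditions makes $\Phi_t$ a $C^\infty$ map whose Jacobian $J(t)$ solves the variational equation $\dot J=-\nabla^2L(\theta(t))\,J$, $J(0)=I$, and is therefore an invertible fundamental matrix for every $t$; together with backward uniqueness this exhibits $\Phi_t$ as a diffeomorphism onto its image with inverse $\Phi_{-t}$. Concatenating across the finitely many transversal crossings, $\Phi_T$ is a diffeomorphism on the open, full-measure set of initializations whose trajectory meets $B$ only at isolated transversal points, and discarding its measure-zero complement gives the stated conclusion. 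Everything apart from the crossing behaviour flagged in the previous paragraph is routine ODE theory combined with the energy identity.
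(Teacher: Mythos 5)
Your proof follows essentially the same route as the paper: Picard--Lindel\"of on the open cells where $-\nabla L$ is polynomial, the energy identity $\int_0^t\|\dot\theta(s)\|^2\,ds\le L(\theta_0)$ plus Cauchy--Schwarz to preclude finite-time blow-up, and smooth dependence on initial conditions (you phrase it through the variational equation, the paper cites Hartman) for the diffeomorphism property, concatenated across cell boundaries with the exceptional initializations absorbed into a Lebesgue-null set. The one notable difference is how the boundary-crossing subtleties are handled. You explicitly flag the possibility of Zeno behavior --- transition times accumulating, or the trajectory sliding along a boundary stratum --- and propose analyticity within each cell, o-minimality, or simply enlarging the null set as remedies; the paper does not raise this issue at all. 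Conversely, the paper explicitly rules out the trajectory reaching a face where more than two cells meet (its Theorem~\ref{thm:GF_misses_codim2}, showing GF avoids codimension-$\ge 2$ strata for a.e.\ initialization), which is what removes the ambiguity of \emph{which} cell to continue into; you only implicitly assume transversal codimension-one crossings. On your side, note that the within-cell analyticity argument is not quite sufficient on its own: it shows the defining polynomial $q(\theta(t))$ has isolated zeros on each analytic arc, but accumulation of transition times would arise from infinitely many distinct arcs, and analyticity restarts from scratch on each one. Closing this rigorously would require the o-minimality route you mention, or an argument in the spirit of Theorem~\ref{thm:GF_misses_codim2} combined with compactness of the trajectory on $[0,T]$; you have correctly identified this as the main obstacle, but the step remains flagged rather than fully resolved --- a gap the paper shares, and does not even acknowledge.
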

\begin{proof}
  Note that we assume $\nabla L(\theta(0))\neq 0$ to exclude the trivial case of constant solution. Since GF cannot reach such a critical point, we need not worry about this case for the inverse solution. Moreover, we assume the initialization does not occur on a boundary, in order to apply Picard's theorem. Since boundaries and the set $\{\theta\in\R^P:\ \nabla L(\theta)=0\}$ have zero measure (since $\nabla L(\theta)$ is a piecewise nonzero polynomial for almost all input datasets), the following arguments hold for almost all intializations amd input data.
  
  Consider $-\nabla L(\theta)$ in its initialization piece. Then, it is a polynomial in $\theta$, and thus is Lipschitz continuous in any closed ball around the initialization $\theta_0$. By Picard's existence and uniqueness theorem \citep{hartman1964} the GF equation has a unique solution in a neighbourhood of $t=0$. Since there is no finite time blowup of the solution (see theorem \ref{thm:no_blowup}), it can be extended beyond a neighbourhood of $t=0$ \citep{hartman1964}, either to all $t>0$ or till it hits the boundary of its piece.
  
  If this solution hits the boundary of the piece containing the initial condition $\theta(0)$, we can construct solution for the same ODE in the next piece, with initial condition being defined as the end point of the previous solution curve. Note that the curve will reach an intersection of more than two pieces for a zero measure set of initializations (see theorem \ref{thm:GF_misses_codim2}). Consequently, we do not need to worry about having to choose between two or more pieces for the solution curve to continue into.

  Finally, these solutions can be joined together continuously (due to no finite time blowup, see theorem \ref{thm:no_blowup}) until we reach $t=T$. Define this piecewise differentiable, but continuous everywhere solution of the GF equation by $\Phi_t(\theta_0):=\theta(t)$ for all $t\geq0$. 
  
  Now, note that the $\Phi_t$ is a piecewise diffeomorphism over the initial parameter space minus a measure zero set (ie. excluding critical points, boundary points and points with image/preimage curve that reaches an intersection of more than two pieces) since $-\nabla L(\theta(t))$ is piecewise continuously differentiable with respect to initial conditions \citep{hartman1964}, all piecewise solutions are invertible as they are solutions of an autonomous system of ODEs, which all exist and can be joined uniquely by the above arguments, and the inverse solution $\Phi_{t}^{-1}=\Phi_{-t_0}$ is also piecewise differentiable (again by previous arguments).
\end{proof}

\begin{figure}
    \centering 
    \begin{subfigure}[t]{0.32\textwidth}
        \includegraphics[width=\linewidth]{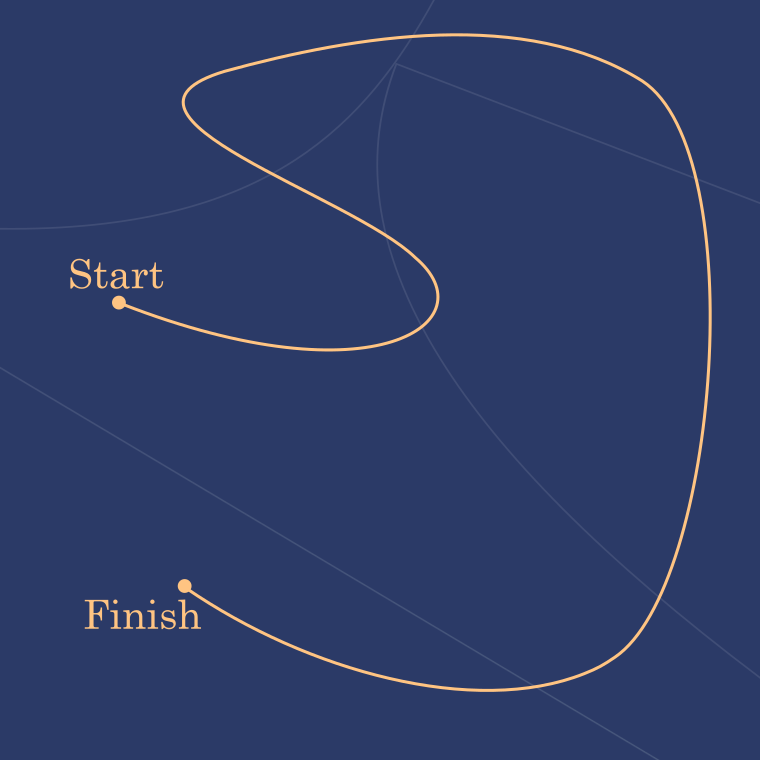}
        \caption{Solution curve of the ODE in the initialization (blue) piece till finish ($t=T$). We cut off this solution at the point where it exits the initialization piece.}
        \label{fig:init}
    \end{subfigure}
    \hfill 
    \begin{subfigure}[t]{0.32\textwidth}
        \includegraphics[width=\linewidth]{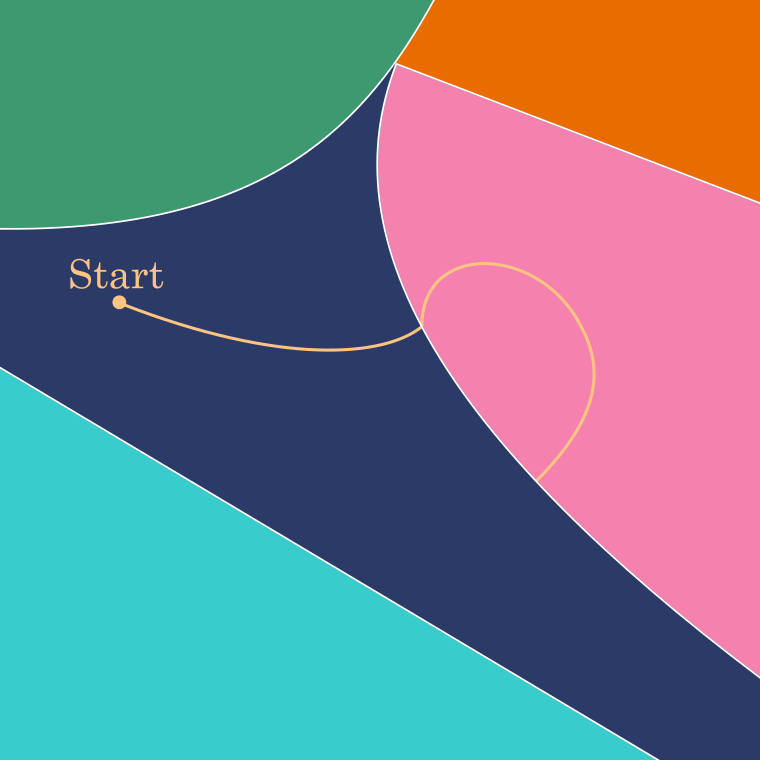}
        \caption{We continue the solution by solving the ODE in the next (pink) piece and continue till we exit it.}
        \label{fig:first_piece}
    \end{subfigure}
    \hfill 
    \begin{subfigure}[t]{0.32\textwidth}
        \includegraphics[width=\linewidth]{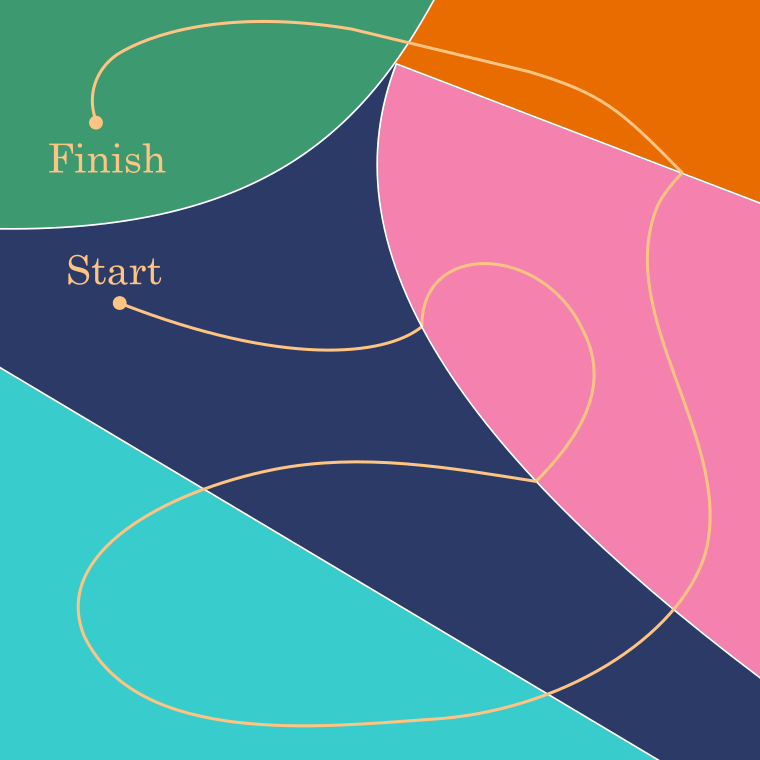}
        \caption{We keep on finding solutions for every piece and join them together to form the final solution. The curve will not hit an intersection of more than two pieces for almost all initializations.}
        \label{fig:full_soln}
    \end{subfigure}
    \caption{Constructing the solution curve through the parameter space.}
    \label{fig:param_space}
\end{figure}

\section{Main Result}
\label{sec:results}

\begin{theorem}[Linear convergence of GF]\label{thm:main}
  Appling gradient flow to functions of form (\ref{eqn:function_f}) with $P\geq nM$, the following holds almost surely with respect to the initialization and input distributions, for all $t\in[0,T]$.
  \begin{equation}
    ||y-F_t|| \leq e^{-\lambda_0t} ||y-F_0||
  \end{equation}
  which is equivalent to saying $L(\theta(t))\leq e^{-2\lambda_0t}L(\theta(0))$, where $\lambda_0>0$ and
  \begin{align}
    F_t&:= \qty((f(X_1,\theta(t)))_1,\ldots,(f(X_1,\theta(t)))_M,\ldots,(f(X_n,\theta(t)))_1,\ldots,(f(X_n,\theta(t)))_M)^T\in\R^{nM} \nonumber\\
    y&:= ((y_1)_1,\ldots,(y_1)_M,\ldots,(y_n)_1,\ldots,(y_n)_M)^T\in\R^{nM} \nonumber
  \end{align}
\end{theorem}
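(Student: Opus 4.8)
The plan is to carry out the standard neural tangent kernel (NTK) argument: reduce exponential decay of the residual to a uniform positive lower bound $\lambda_0$ on the smallest eigenvalue of the NTK Gram matrix along the trajectory, and then produce that bound from the piecewise-polynomial structure of $f$ together with Lemma~\ref{lemma:GF_welldefined}.

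First I would set up the residual dynamics. Write $r(t):=F_t-y\in\R^{nM}$ and $J(t):=\partial F_t/\partial\theta\in\R^{nM\times P}$, the parameter-Jacobian of the stacked outputs, which exists on the interior of each polynomial piece, hence at all $t$ outside the finitely many activation-breakpoint crossing times controlled by Lemma~\ref{lemma:GF_welldefined} and Theorems~\ref{thm:no_blowup},~\ref{thm:GF_misses_codim2}. Since $L(\theta)=\tfrac12\|r\|^2$ gives $\nabla L=J^\top r$, the gradient flow equation yields $\dot r=J\dot\theta=-JJ^\top r=-\Theta(t)\,r$, where $\Theta(t):=J(t)J(t)^\top\in\R^{nM\times nM}$ is positive semidefinite by construction. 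Hence $\frac{d}{dt}\|r\|^2=-2\,r^\top\Theta(t)\,r\le-2\lambda_{\min}(\Theta(t))\|r\|^2$, and if $\lambda_{\min}(\Theta(t))\ge\lambda_0>0$ on $[0,T]$ then Gr\"onwall's inequality gives $\|r(t)\|\le e^{-\lambda_0 t}\|r(0)\|$, which is exactly the claimed bound (equivalently $L(\theta(t))\le e^{-2\lambda_0 t}L(\theta(0))$). So everything reduces to producing such a $\lambda_0$.

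Next I would establish positive-definiteness of $\Theta$ at initialization, almost surely. Since $P\ge nM$, $\Theta(\theta,X_1,\dots,X_n)$ fails to be positive definite exactly when $J$ drops below full row rank, i.e.\ when $\det\Theta=0$, and $\det\Theta$ is a piecewise polynomial in the parameters and the inputs (each $g_i$ is polynomial, $\relu$ is piecewise polynomial). On each piece $\det\Theta$ is therefore either identically zero or has a Lebesgue-null zero set, so once the identically-zero case is excluded the bad set has probability $0$ under the absolutely continuous initialization and input distributions, giving $\Theta(0)\succ0$ almost surely. To propagate this along the flow, I would use that by Lemma~\ref{lemma:GF_welldefined} the time-$t$ map $\Phi_t$ is a diffeomorphism off a measure-zero set, hence sends and pulls back null sets to null sets, so for each fixed $t$ the set of initializations with $\det\Theta(\theta(t))=0$ is null; a Fubini argument applied to $\{(\theta_0,t):\det\Theta(\Phi_t(\theta_0))=0\}$ then shows that for almost every $\theta_0$ one has $\det\Theta(\theta(t))\ne0$ for almost every $t\in[0,T]$. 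Upgrading ``almost every $t$'' to ``every $t$'' uses that, along a trajectory, $t\mapsto\det\Theta(\theta(t))$ is real-analytic on each of the finitely many intervals between piece crossings (the flow of a polynomial vector field is analytic, composed with a polynomial), so on each such interval it is either identically zero --- impossible since it is nonzero a.e.\ there --- or has only isolated zeros; continuity together with compactness of $[0,T]$ then yields $\lambda_0:=\min_{t\in[0,T]}\lambda_{\min}(\Theta(t))>0$.

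I expect the main obstacle to be the non-degeneracy step: showing $\det\Theta\not\equiv0$ on the relevant pieces. Positive semidefiniteness and $P\ge nM$ alone do not force full row rank (a parametrization could waste all its coordinates), so the argument should genuinely use that the architecture exploits its parameters --- concretely, I would exhibit a single configuration $(\theta^*,X_1^*,\dots,X_n^*)$ at which the $nM\times P$ Jacobian has rank $nM$, for instance by choosing weights in the final polynomial layer(s) so that the $nM$ output coordinates depend on $nM$ independent parameter directions, and then invoke the piecewise-polynomial dichotomy above. A secondary technical point deserving care is the behaviour of $J(t)$, and hence $\Theta(t)$, across activation-breakpoint crossings, where $J$ may jump; but since such crossings occur at only finitely many times (Theorems~\ref{thm:no_blowup},~\ref{thm:GF_misses_codim2}), they cannot destroy the uniform lower bound.
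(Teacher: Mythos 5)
Your overall route is the one the paper takes: derive $\dot r=-\Theta(t)r$ with $\Theta=JJ^\top$, reduce to a uniform lower bound $\lambda_0$ on $\lambda_{\min}(\Theta(t))$, show $\Theta(0)\succ0$ a.s.\ via the piecewise-polynomial structure of $\det\Theta$, and propagate along the flow using that $\Phi_t$ is a.e.\ a diffeomorphism (Lemma~\ref{lemma:GF_welldefined}). The paper's Lemma~\ref{lemma:pos_def} and its invocation of the extreme value theorem are exactly your steps. You are also right that two points deserve scrutiny: (i) the paper's per-$t$ statement $\p(\theta(t)\in\theta_\degen)=0$ does not immediately give $\p(\exists t\in[0,T]:\theta(t)\in\theta_\degen)=0$ (an uncountable union), and (ii) the polynomial non-degeneracy $\det\Theta\not\equiv0$ on each piece must be verified rather than assumed before invoking the Lebesgue-null-zero-set fact.

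However, your fix for (i) does not close. After Fubini you know that for a.e.\ $\theta_0$ the zero set $\{t\in[0,T]:\det\Theta(\theta(t))=0\}$ is Lebesgue-null, and real-analyticity on each inter-crossing interval upgrades this to ``only isolated zeros.'' But isolated zeros of $\det\Theta(\theta(t))$ are precisely times $t_i$ at which $\lambda_{\min}(\Theta(t_i))=0$, so $\min_{t\in[0,T]}\lambda_{\min}(\Theta(t))=0$ and your claimed $\lambda_0>0$ fails. To salvage the exponential rate you would instead need to show either that the trajectory avoids $\theta_\degen$ entirely (note this set is generically codimension $1$, so Theorem~\ref{thm:GF_misses_codim2} does not apply), or to replace the constant-rate Gr\"onwall step with the integrated form $\|r(t)\|\le e^{-\int_0^t\lambda_{\min}(\Theta(s))\,ds}\|r(0)\|$ together with a quantitative lower bound on that integral that does not collapse near the finitely many isolated zeros. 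As written, the final ``continuity together with compactness then yields $\lambda_0>0$'' step is the genuine gap; everything before it agrees in substance with the paper.
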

\begin{proof}
The first half of this proof is inspired by previous works \citep{du_ICML19,du_ICLR19}, where we bound the error using the least eigenvalue of the `neural tangent kernel'.

Fix some $t\geq0$. Gradient of loss can be written as $\pdv{L(\theta(t))}{\theta(t)} = \sum_{j=1}^n \pdv{f(X_j,\theta(t))}{\theta(t)}\qty(f(X_j,\theta(t))-y_j)$ by chain rule. Using this fact, we calculate the dynamics of the predictions. 
\begin{align}
  \dv{}{t}f(X_i,\theta(t)) &= \qty(\pdv{f(X_i,\theta(t))}{\theta(t)})^T\dv{\theta(t)}{t} = -\qty(\pdv{f(X_i,\theta(t))}{\theta(t)})^T\pdv{L(\theta(t))}{\theta(t)} \nonumber \\
  &= -\qty(\pdv{f(X_i,\theta(t))}{\theta(t)})^T\qty(\sum_{j=1}^n \pdv{f(X_j,\theta(t))}{\theta(t)}\qty(f(X_j,\theta(t))-y_j))\nonumber \\
  &= \sum_{j=1}^n \qty(\qty(\pdv{f(X_i,\theta(t))}{\theta(t)})^T\pdv{f(X_j,\theta(t))}{\theta(t)}) (y_j-f(X_j,\theta(t))) \label{eqn:dynamics}
\end{align}

Concatenating output vectors, we get $\dv{F_t}{t}=G(t)(y-F_t)$, with 
\begin{equation}
  G(t):= \qty(\qty(\pdv{F_t}{\theta(t)})_p^T\qty(\pdv{F_t}{\theta(t)})_q)_{p,q=1}^{nM}
\end{equation}
where $\pdv{F_t}{\theta(t)}=\qty(\pdv{(f(X_1,\theta(t)))_1}{\theta(t)},\ldots,\pdv{(f(X_1,\theta(t)))_{M}}{\theta(t)},\ldots,\pdv{(f(X_n,\theta(t)))_1}{\theta(t)},\ldots,\pdv{(f(X_n,\theta(t)))_{M}}{\theta(t)})$ is a $P\times nM$ matrix and $P\in\N$ is the number of trainable parameters. $G(t)$ is the gram matrix called the NTK at time $t$ of training. 

Now, $G(t)$ is a nonegative definite matrix, since it is a gram matrix. We further claim (from lemma \ref{lemma:pos_def}) that it is a positive definite matrix almost surely (with respect to initialization and input data).

Differentiating the loss, we get 
\begin{equation}\label{eqn:diffn_loss}
  \dv{t}\qty(||y-F_t||^2) = -2(y-F_t)^TG(t)(y-F_t) \leq -2\lambda_0 ||y-F_t||^2
\end{equation}

where $\lambda_0>0$ is the least eigenvalue of $G(t)$ for $t\in[0,T]$. $\lambda_0>0$ is true because of the following argument. The minimum eigenvalue of a matrix varies continuously with the matrix entries. $G_{pq}(t)$ is continuous since $\theta(t)$ varies continuously with $t$ and $\relu$ is continuous. Thus, by extreme value theorem applied on the compact interval $[0,T]$, the minimum eigenvalue is achieved at some $t_0\in[0,T]$. Say the minimum eigenvalue of $G(t)$ at $t_0$ is $\lambda_0$. We know $\lambda_0>0$ due to lemma \ref{lemma:pos_def}. 

Rearranging, we get $\dv{t}\qty(e^{2\lambda_0 t}||y-F_t||^2)\leq 0$. Thus, putting in $t=0$ and $t=s\leq T$, we get $||y-F_s|| \leq e^{-\lambda_0 s} ||y-F_0||$.
\end{proof}

\begin{corollary}[Extension to ReLU activation]\label{corr:relu}
  The above linear convergence result holds for ReLU activation as well.
\end{corollary}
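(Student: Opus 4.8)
The plan is to obtain Corollary~\ref{corr:relu} from Theorem~\ref{thm:main} by an approximation argument; the only obstacle is that $x\mapsto\max\{x,0\}$ vanishes on all of $(-\infty,0]$ and hence fails the ``zero at finitely many values'' hypothesis built into \eqref{eqn:function_f}. First I would introduce the leaky-ReLU family $\sigma_k(x):=\max\{x,x/k\}$, $k\in\N$: each $\sigma_k$ is continuous, equals the nonzero polynomial $x/k$ on $(-\infty,0]$ and the nonzero polynomial $x$ on $[0,\infty)$, vanishes only at $x=0$, satisfies $\sigma_k\le\sigma_{\mathrm{ReLU}}$ everywhere, and obeys $\sup_{|x|\le R}|\sigma_k(x)-\sigma_{\mathrm{ReLU}}(x)|\le R/k$ as well as $\sup_{x\ne 0}|\sigma_k'(x)-\sigma_{\mathrm{ReLU}}'(x)|\le 1/k$. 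Hence replacing $\relu$ by $\sigma_k$ in \eqref{eqn:function_f} yields a network to which Theorem~\ref{thm:main} applies verbatim: for each $k$, outside a Lebesgue-null set of $(\theta_0,X_1,\dots,X_n)$ there is $\lambda_k>0$ with $\|y-F^{(k)}_t\|\le e^{-\lambda_k t}\|y-F^{(k)}_0\|$ on $[0,T]$, where $F^{(k)}$ is the prediction vector of the $\sigma_k$-network run by gradient flow from $\theta_0$. Intersecting the exceptional sets over the countably many $k$, this holds for all $k$ simultaneously, almost surely.

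Then I would send $k\to\infty$. The uniform estimates above give $-\nabla L_k\to-\nabla L_{\mathrm{ReLU}}$ uniformly on compact subsets of parameter space off the finitely many ``some pre-activation $=0$'' hypersurfaces; feeding this into the piecewise construction of Lemma~\ref{lemma:GF_welldefined} (continuous dependence of the ODE on the vector field inside each activation region, transversal crossing of the hypersurfaces at codimension-one points only), the trajectories $\theta^{(k)}(\cdot)$ converge uniformly on $[0,T]$ to the gradient-flow trajectory $\theta(\cdot)$ of the ReLU network. Consequently $F^{(k)}_t\to F_t$ and $F^{(k)}_0\to F_0$ for every $t\in[0,T]$, the Gram matrices $G_k(t)$ converge uniformly on $[0,T]$ to the ReLU NTK $G(t)$, and since the least eigenvalue of a symmetric matrix is continuous in its entries, $\lambda_k=\min_{t\in[0,T]}\lambda_{\min}(G_k(t))\to\lambda_0:=\min_{t\in[0,T]}\lambda_{\min}(G(t))$. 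Passing to the limit in the displayed bound then gives $\|y-F_t\|\le e^{-\lambda_0 t}\|y-F_0\|$ on $[0,T]$, which is exactly the corollary once we know $\lambda_0>0$.

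The hard part is precisely the strict positivity $\lambda_0>0$ almost surely, because a limit of positive-definite matrices is only guaranteed to be positive semidefinite, so the $\lambda_k>0$ do not by themselves survive the limit. I would close this by proving, in the spirit of Lemma~\ref{lemma:pos_def}, that the ReLU NTK is positive definite along the trajectory for almost every $(\theta_0,X_1,\dots,X_n)$: for generic data the ``pre-activation $=0$'' hypersurfaces cut parameter space into finitely many open regions of fixed activation pattern, on each of which $\sigma_{\mathrm{ReLU}}$ acts as either the identity (a nonzero polynomial) or as $0$, so the prediction map restricts to a fixed polynomial in $\theta$ there and $\det G(0)$ is a polynomial in $\theta$; one then argues, using $P\ge nM$, absolute continuity of the distributions, and the reasoning of Lemma~\ref{lemma:pos_def} applied to the polynomial ``all-active'' restriction, that this determinant is not identically zero on the regions carrying positive mass, so its zero set is null, and the diffeomorphism property of the flow from Lemma~\ref{lemma:GF_welldefined} propagates nonsingularity of $G(t)$ along $[0,T]$. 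Ruling out activation patterns on which the NTK degenerates identically is where I expect the real work to lie; by contrast, existence and uniqueness of the ReLU gradient-flow trajectory used above is essentially free (either from the same piecewise-Picard construction or as the uniform limit of the $\theta^{(k)}$), and the degenerate case $F_0=y$ is trivial since then the loss is identically zero. Adding this last null set to the earlier countable union, the bound $\|y-F_t\|\le e^{-\lambda_0 t}\|y-F_0\|$ holds almost surely, establishing the corollary.
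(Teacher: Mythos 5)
Your proposal takes a genuinely different route from the paper's. The paper also introduces a leaky-ReLU approximant, but it does \emph{not} re-establish positive definiteness for the ReLU NTK in the limit: instead it asserts the quadratic form $(y-F_{\alpha,t})^\top G_\alpha(t)(y-F_{\alpha,t})$ is bounded below by $(y-F_{\alpha,t})^\top \tilde{G}(t)(y-F_{\alpha,t})$, where $\tilde{G}$ is the Gram matrix of the \emph{linearized} (identity-activation) network, on the grounds that $G_\alpha(t)_{pq}\ge\tilde{G}(t)_{pq}$ entrywise because leaky ReLU pointwise dominates the identity, and then lets $\alpha\to0$. The positivity $\lambda_0>0$ is thus borrowed wholesale from the fixed, $\alpha$-independent $\tilde G$, to which Theorem~\ref{thm:main} already applies. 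You instead pass the leaky-ReLU bound to the limit directly and correctly observe that the bottleneck is that $\lambda_k>0$ need not survive the limit (a limit of positive-definite matrices is only PSD), so you propose to re-derive positive-definiteness of the ReLU NTK itself via activation-pattern-wise polynomiality. Your diagnosis of where the real difficulty lies is sharper than the paper's, which finesses the positivity question rather than confronting it.

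There is, however, a genuine gap in your argument, and you have flagged it yourself: ruling out activation patterns on which the ReLU NTK degenerates identically. This is not a routine re-run of Lemma~\ref{lemma:pos_def}. That lemma relies on $\relu$ vanishing at only finitely many points to conclude that the relevant determinants are not-identically-zero piecewise polynomials; ReLU vanishes on an entire half-line, so on activation regions where, for instance, every unit in a hidden layer is dead, the restricted network is literally constant in all upstream parameters and the determinant of the Jacobian sub-block is \emph{identically} zero on that region. ``Nonzero on each positive-mass region'' therefore needs a separate argument showing either that such degenerate regions have zero probability or that the gradient-flow trajectory almost surely never enters them, and the diffeomorphism-propagation trick from Lemma~\ref{lemma:GF_welldefined} only protects against entering a null set, not an open region. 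As an aside, you would also be right to distrust the paper's shortcut: entrywise domination $A_{pq}\ge B_{pq}$ of symmetric matrices does not in general yield $v^\top A v\ge v^\top B v$ (the Loewner order is not the entrywise order), and the entrywise claim itself conflates domination of activation \emph{values} with domination of Gram-matrix \emph{entries}, which are built from derivatives along the whole backward path; so the paper's route does not obviously sidestep the positivity issue you identified either.
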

\begin{proof}
Let $F_{\alpha,t},G_{\alpha}(t)$ be the output vector and gram matrix for a leaky ReLU network with hyperparameter $\alpha\leq1$ at time $t$. Let $F_t$ be the output vector for the same network but with ReLU activation instead of leaky ReLU, at time $t$. Now, note that
\begin{equation}
  (y-F_{\alpha,t})^TG_\alpha(t)(y-F_{\alpha,t})\geq (y-F_{\alpha,t})^T\tilde{G}(t)(y-F_{\alpha,t}) \nonumber
\end{equation}
where $\tilde{G}(t)$ stands for the gram matrix of the linearized network. This holds because $G_\alpha(t)_{pq}\geq \tilde{G}(t)_{pq}$ for all $p,q\in[nM]$ since leaky ReLU is bounded below by identity function. 
Since equation \ref{eqn:dynamics} holds for any a.e. differentiable function and leaky ReLU converges to ReLU, 
\begin{align}
  \dv{t}\qty(||y-F_t||^2) &= -2(y-F_t)^TG(t)(y-F_t) = -2\limsup_{\alpha\to0} (y-F_{\alpha,t})^TG_\alpha(t)(y-F_{\alpha,t})\nonumber\\
  &\leq -2\limsup_{\alpha\to0} (y-F_{\alpha,t})^T\tilde{G}(t)(y-F_{\alpha,t}) = -2(y-F_t)^T\tilde{G}(t)(y-F_t)\nonumber
\end{align}
Now, following the same arguments as in theorem \ref{thm:main}, we reach the conclusion for ReLU function as well.
\end{proof}

\begin{corollary}[Extension to sigmoid activation]\label{corr:sigmoid}
  The above result holds for sigmoid activation as well.
\end{corollary}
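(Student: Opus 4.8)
The plan is to mirror the proof of Corollary \ref{corr:relu}: approximate $\sigma$ by a family of piecewise non‑zero polynomial activations $\tilde\sigma_\varepsilon$ converging uniformly to $\sigma$ (together with their derivatives), apply Theorem \ref{thm:main} to each resulting network, and pass to the limit $\varepsilon\to 0$. Fix $T>0$, let $\theta_\varepsilon(t)$ be the GF trajectory of the network obtained by replacing the piecewise‑polynomial activation in (\ref{eqn:function_f}) with $\tilde\sigma_\varepsilon$, and write $F_{\varepsilon,t}$, $G_\varepsilon(t)$ for its prediction vector and NTK Gram matrix. Theorem \ref{thm:main} already yields, almost surely, $\lVert y-F_{\varepsilon,t}\rVert\le e^{-\lambda_\varepsilon t}\lVert y-F_{\varepsilon,0}\rVert$ on $[0,T]$ for some $\lambda_\varepsilon>0$; the content of the corollary is to produce the approximants and to show the conclusion survives the limit.

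\textbf{Constructing $\tilde\sigma_\varepsilon$.} Since $\sigma$ is real‑analytic, strictly increasing and valued in $(0,1)$, on a window $[-R,R]$ take a polynomial $q_{\varepsilon,R}$ (Bernstein, or a truncated Taylor polynomial, shifted down by a vanishing amount) with $q_{\varepsilon,R}\le\sigma$ on $[-R,R]$ and $\lVert q_{\varepsilon,R}-\sigma\rVert_{C^1([-R,R])}<\varepsilon$, and extend $\tilde\sigma_\varepsilon$ outside $[-R,R]$ by the two positive constants $q_{\varepsilon,R}(\pm R)$. Choosing $R=R_\varepsilon\to\infty$ slowly enough, $\tilde\sigma_\varepsilon\to\sigma$ uniformly on $\R$ (the tails are pinched between $0$ and $\sigma(-R_\varepsilon)$, resp.\ between $\sigma(R_\varepsilon)$ and $1$) and $\tilde\sigma_\varepsilon\le\sigma$ everywhere. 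Each $\tilde\sigma_\varepsilon$ has finitely many closed pieces, every one a non‑zero polynomial (the middle piece vanishing at finitely many points, the two tails non‑zero constants), so it satisfies exactly the hypotheses imposed on the activation in Section \ref{sec:preliminaries}, and Theorem \ref{thm:main} genuinely applies to the $\tilde\sigma_\varepsilon$‑network.

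\textbf{Passing to the limit.} On $[0,T]$ the sigmoid GF trajectory stays in a fixed compact set (as in Lemma \ref{lemma:GF_welldefined}, now with a genuinely $C^\infty$ vector field), hence so does $\theta_\varepsilon(t)$ for small $\varepsilon$; in particular every pre‑activation along these trajectories eventually lies inside $[-R_\varepsilon,R_\varepsilon]$, where $\tilde\sigma_\varepsilon$ and $\tilde\sigma_\varepsilon'$ are uniformly $\varepsilon$‑close to $\sigma$ and $\sigma'$. Because $\nabla L$ depends continuously on the activation and its derivative, uniformly on compacts, continuous dependence of ODE solutions on the vector field gives $\theta_\varepsilon\to\theta$, $F_{\varepsilon,t}\to F_t$ and $G_\varepsilon(t)\to G(t)$ uniformly on $[0,T]$, where $(\theta,F_t,G(t))$ are the sigmoid quantities. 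Letting $\varepsilon\to 0$ in the identity of (\ref{eqn:diffn_loss}) then reduces everything to showing that $G(t)$ is positive definite for all $t\in[0,T]$ almost surely, i.e.\ that $\lambda_\varepsilon$ does not collapse to $0$.

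\textbf{The main obstacle.} This non‑degeneracy is the delicate step, and I expect it to be the real work; the construction and the continuous‑dependence limit are routine. The entrywise‑domination device of Corollary \ref{corr:relu} has no convenient analogue here (there is no non‑zero polynomial bounded below $\sigma$ uniformly on all of $\R$), so instead I would re‑run the proof of Lemma \ref{lemma:pos_def} directly for the sigmoid network. This is legitimate because the zero set of a real‑analytic function that is not identically zero still has Lebesgue measure zero; the only thing left to check is that the relevant principal minor of $G$, viewed as a real‑analytic function of $(\theta,X_1,\dots,X_n)$, is not identically zero, and that can be reduced to the piecewise‑polynomial case already handled in Lemma \ref{lemma:pos_def} by a Taylor/perturbation argument about a convenient base point (e.g.\ scaling the pre‑activations so that $\sigma$ is essentially affine, or matching a finite Taylor jet of $\sigma$ to that of a polynomial activation for which non‑vanishing is known). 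Once $G(t)$ is almost surely positive definite on $[0,T]$, the extreme‑value‑theorem step of Theorem \ref{thm:main} supplies $\lambda_0>0$ and the Grönwall step concludes the proof verbatim.
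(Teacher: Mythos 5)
Your proposal opens the same way the paper does---approximate $\sigma$ uniformly by piecewise non-zero polynomial activations (polynomial on a window, constant tails)---but then takes a genuinely different route. The paper mirrors Corollary \ref{corr:relu}: after constructing $p_n$ it produces a fixed piecewise non-zero polynomial comparator whose Gram matrix $\tilde G(t)$ is claimed to lie entrywise below $G_n(t)$, plugs that bound into (\ref{eqn:diffn_loss}), and passes $\limsup_{n\to\infty}$; there is no continuous-dependence or real-analyticity argument at all in the paper's proof. You explicitly reject that domination device---correctly observing that, unlike leaky ReLU, sigmoid admits no useful non-zero polynomial minorant on all of $\R$---and instead propose to pass to the limit via continuous dependence of the GF trajectory on the vector field (requiring $C^1$-convergence of $\tilde\sigma_\varepsilon$, trapping trajectories in a compact set), and then to re-derive positive definiteness of the sigmoid NTK directly from the zero-measure property of zero sets of non-trivial real-analytic functions, rerunning Lemma \ref{lemma:pos_def} rather than importing a bound. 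Your route is more technically demanding but also more self-contained and more careful about the hypotheses of Section \ref{sec:preliminaries}; its one undeveloped step is the non-vanishing of the relevant principal minor as a real-analytic function of $(\theta, X_1,\dots,X_n)$, which you only sketch via a Taylor/perturbation reduction to the polynomial case. The paper's route is shorter, but it rests on the same entrywise Gram domination used for ReLU, and the chosen comparator ($q(x)=x-\tfrac14$ for $x\le\tfrac14$, $\tfrac14-x$ for $x\ge\tfrac14$) together with the stated justification (``$\sigma>0$ implies $p_n<1/4$'') is not spelled out in a way that transparently yields $G_n(t)_{pq}\ge\tilde G(t)_{pq}$, so your instinct to avoid it is defensible even though it diverges from what the paper actually does.
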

\begin{proof}
Applying Jackson's inequality (see theorem \ref{thm:jackson}) to sigmoid, we get that there exists a sequence of polynomials $(p_n')$ such that $||\sigma-p_n'||_\infty\leq\frac{1}{4}$, since $|\sigma'|\leq1/4$, $\pi/2<1$ and $n+1>1$. Note that Jackson's inequality holds for differentiable functions over a compact domain, but we can always construct the polynomial such that Jackson's inequality holds in some $[-M,M]$ for $(p_n')$. 

We then define our new sequence of piecewise polynomials $(p_n)$ that converges to $\sigma$ as follows : $p_n=p_n'$ over $[-M,M]$, $p_n(x)=\sigma(M)$ for $x\geq M$ and $p_n(x)=\sigma(-M)$ for $x\leq -M$, where $M>0$ is chosen such that $|\sigma(x)-1|<\varepsilon$ for $x\geq M$ and $|\sigma(x)|<\varepsilon$ for $x\leq -M$, for an appropriate $\varepsilon>0$ so as to ensure uniform convergence. This can be done due to the horizontal asymptotes of sigmoid.

Let $F_{n,t},G_{n}(t)$ be the output vector and gram matrix for a network with activation $p_n$ at time $t$. Let $F_t$ be the output vector for the same network but with sigmoid activation instead of $p_n$, at time $t$. Now, note that
\begin{equation}
  (y-F_{n,t})^TG_n(t)(y-F_{n,t})\geq (y-F_{n,t})^T\tilde{G}(t)(y-F_{n,t}) \nonumber
\end{equation}
where $\tilde{G}(t)$ stands for the gram matrix of any piecewise nonzero polynomial such that it is always at least $1/4$ below $p_n$, since $\sigma>0$ which implies $p_n<1/4$. For example the piecewise polynomial $q(x):=x-\frac14$ for $x\leq \frac14$, and $\frac14-x$ for $x\geq \frac14$.

Since equation \ref{eqn:dynamics} holds for any a.e. differentiable function and $p_n$ converges to $\sigma$, 
\begin{align}
  \dv{t}\qty(||y-F_t||^2) &= -2(y-F_t)^TG(t)(y-F_t) = -2\limsup_{n\to\infty} (y-F_{n,t})^TG_n(t)(y-F_{n,t}) \nonumber\\
  &\leq -2\limsup_{n\to\infty} (y-F_{n,t})^T\tilde{G}(t)(y-F_{n,t}) = -2(y-F_t)^T\tilde{G}(t)(y-F_t)\nonumber
\end{align}
Now, following the same arguments as in theorem \ref{thm:main}, we reach the conclusion for sigmoid function as well.
\end{proof}

We shall now complete the proof of theorem \ref{thm:main} by proving that $G(t)$ remains positive definite throughout training for almost all initializations and input data.
\begin{lemma}[Positive definiteness of NTK]\label{lemma:pos_def}
  $G(t)$ is a positive definite matrix almost surely (with respect to initialization and input data) for all $t\geq0$.
\end{lemma}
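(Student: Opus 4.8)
The reduction: since $G(t)=J(t)^{T}J(t)$ is a Gram matrix, where $J(t):=\partial F_t/\partial\theta(t)\in\R^{P\times nM}$, positive definiteness of $G(t)$ is equivalent to the $nM$ columns of $J(t)$ being linearly independent, i.e.\ to $\det G(t)\neq0$. So it suffices to show that, almost surely in $\theta_0$ and $X_1,\dots,X_n$, one has $\det G(t)\neq0$ for \emph{all} $t\geq0$. The plan has two steps: the base case $t=0$, handled by the fact that a nonzero polynomial vanishes only on a Lebesgue-null set; and the upgrade to all $t$, handled using that the flow $\Phi_t$ is a diffeomorphism off a null set (Lemma~\ref{lemma:GF_welldefined}).

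Base case. For generic fixed inputs, $\R^{P}$ is, up to a Lebesgue-null set, a union of open semi-algebraic cells on which every pre-activation of $f$ lies strictly inside a single polynomial piece of $\relu$; on each such cell $\theta\mapsto f(X_j,\theta)$ is a genuine polynomial, so the entries of $J(0)$ are polynomials in $(\theta_0,X_1,\dots,X_n)$ and hence so is $\det G(0)$ (by Cauchy--Binet it is the sum of squares of the $nM\times nM$ minors of $J(0)$). A polynomial which is not identically zero vanishes only on a Lebesgue-null set, so it remains to exhibit, on each cell, one admissible $(\theta_0^{\star},X^{\star})$ at which $J(0)$ has rank $nM$ --- this is where $P\geq nM$ is used. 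The natural construction is to split $\theta$ into $nM$ disjoint coordinate blocks and choose the remaining weights so that the $(j,a)$-th output coordinate $(f(X_j,\theta))_a$ is moved by the $(j,a)$-th block via a non-constant polynomial while the cross-dependencies vanish, making the corresponding $nM\times nM$ minor of $J(0)$ a nonsingular near-diagonal block. I expect this non-degeneracy to be the main obstacle: for a completely arbitrary composition of polynomial layers it can genuinely fail (e.g.\ a layer of the form $\theta_{1}\theta_{2}X$ forces every Jacobian to be rank-deficient), so a mild non-redundancy assumption on the $g_i$ is really needed, and the non-vanishing must be checked on every cell, not just the initialization cell.

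Propagation. Set $\mathcal B:=\{\theta\in\R^{P}:\det(J(\theta)^{T}J(\theta))=0\}$, which by the base case is Lebesgue-null. For each fixed $t\geq0$, Lemma~\ref{lemma:GF_welldefined} gives that $\Phi_t$, hence $\Phi_t^{-1}=\Phi_{-t}$, is a piecewise diffeomorphism off a null set, so it is locally Lipschitz there and $\Phi_t^{-1}(\mathcal B)$ is null; thus for a.e.\ $\theta_0$, $\det G(t)\neq0$. To pass from ``for each $t$'' to ``for all $t$'', let $S:=\{\theta_0:\det G(0)\neq0\text{ and }\det G(t)=0\text{ for some }t>0\}$ and $\tau(\theta_0):=\inf\{t>0:\det G(t)=0\}$; since $t\mapsto\det G(t)$ is continuous (in fact piecewise real-analytic, because $\theta(t)$ solves a polynomial ODE on each cell) and $\det G(0)\neq0$ on $S$, the infimum is positive and attained, and $\Phi_{\tau(\theta_0)}(\theta_0)\in\mathcal B$. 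The map $\Lambda:S\to\mathcal B\times(0,\infty)$, $\Lambda(\theta_0):=(\Phi_{\tau(\theta_0)}(\theta_0),\tau(\theta_0))$, is injective --- the locally Lipschitz map $(b,t)\mapsto\Phi_{-t}(b)$ is a left inverse --- and $\mathcal B\times(0,\infty)$ is null in $\R^{P+1}$, so $\Lambda(S)$ is null and therefore $S=\{\Phi_{-t}(b):(b,t)\in\Lambda(S)\}$, being the image of a null set under a locally Lipschitz map, is null. Hence $\{\theta_0:\det G(t)=0\text{ for some }t\geq0\}\subseteq\{\det G(0)=0\}\cup S$ is null, which proves the lemma. (On any $[0,T]$ the trajectory visits only finitely many cells and avoids codimension-$2$ strata, so the piecewise bookkeeping of Lemma~\ref{lemma:GF_welldefined} lets everything above be done cell by cell.)
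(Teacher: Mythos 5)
Your proposal follows the paper's route step by step: reduce positive definiteness of $G(t)$ to linear independence of the $nM$ columns of the Jacobian $J(t)$; argue the base case $t=0$ via the fact that a nonvanishing piecewise polynomial is nonzero on a full-measure set of $(\theta_0, X)$; and propagate to $t>0$ using that the flow map $\Phi_t$ from Lemma~\ref{lemma:GF_welldefined} is a piecewise diffeomorphism off a null set and hence, by Theorem~\ref{thm:zeromeas_to_zeromeas}, preserves null sets. Where you go beyond the paper is in flagging two soft spots. First, at $t=0$ the paper invokes \citet{caron2005} for ``nonzero piecewise polynomials vanish on a null set'' but never checks that the determinant polynomial is not identically zero; your example $g_1(X,\theta)=\theta_1\theta_2 X$ shows that for a fully arbitrary composition of polynomial layers the minors can vanish identically, so some non-degeneracy of the $g_i$ is genuinely needed -- this is a real gap you correctly identify and the paper does not address. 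Second, and more importantly, you notice that the paper's propagation step only yields ``for each fixed $t$, $\p(\theta(t)\in\theta_\degen)=0$,'' whereas Theorem~\ref{thm:main} uses positive definiteness at a \emph{data-dependent} time $t_0\in[0,T]$ and so really needs ``almost surely, for all $t\in[0,T]$ simultaneously.'' You are right that these are not the same thing and that passing from one to the other requires an argument.

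However, your attempted repair of this second gap does not work. You argue that $\Lambda(S)\subset\mathcal B\times(0,\infty)$ is Lebesgue-null in $\R^{P+1}$, and that $S=h(\Lambda(S))$ with $h(b,t)=\Phi_{-t}(b)$ locally Lipschitz, concluding that $S$ is null in $\R^{P}$. But a locally Lipschitz map $h:\R^{P+1}\to\R^P$ does \emph{not} carry $\lambda^{P+1}$-null sets to $\lambda^P$-null sets; what a Lipschitz map preserves is the $\mathcal H^s$-null property at the same exponent $s$. To conclude $\lambda^P(h(\Lambda(S)))=0$ you would need $\mathcal H^P(\Lambda(S))=0$, and since $\mathcal B$ is generically a codimension-one hypersurface in $\R^P$ (zero set of a single polynomial, Hausdorff dimension $P-1$), the product $\mathcal B\times(0,\infty)$ has Hausdorff dimension $P$ and typically infinite $\mathcal H^P$-measure. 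Concretely, take $\mathcal B=\{b\in\R^P:b_1=0\}$ and $h(b,t)=(t,b_2,\dots,b_P)$: then $\mathcal B\times(0,1)$ is $\lambda^{P+1}$-null but $h(\mathcal B\times(0,1))=(0,1)\times\R^{P-1}$ has infinite Lebesgue measure. Compare also the toy gradient flow $\dot\theta=1$ on $\R$ with $\mathcal B=\{0\}$: the set of initializations whose trajectory eventually hits $\mathcal B$ is $(-\infty,0]$, which has full measure. So ``for each $t$'' genuinely does not give ``for all $t$'' by soft measure-theoretic arguments alone; closing this gap requires extra structure (for instance, exploiting that $t\mapsto\det G(\theta(t))$ is piecewise real-analytic so its zeros on $[0,T]$ are isolated, and then weakening the conclusion needed in Theorem~\ref{thm:main} from a uniform spectral gap to $\lambda_{\min}(G(t))>0$ for a.e.\ $t$, which still yields exponential decay via $\|y-F_t\|^2\le\exp(-2\int_0^t\lambda_{\min}(G(s))\,ds)\|y-F_0\|^2$). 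The upshot: your proof is the same in spirit as the paper's, is more honest about where the paper is thin, but the patch you propose for the time-uniformity step is incorrect as written.
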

\begin{proof}
We know that $G(t)$ is positive definite iff $\left\{\qty(\pdv{F_t}{\theta(t)})_1,\ldots,\qty(\pdv{F_t}{\theta(t)})_{nM}\right\}\subset\R^P$ is linearly independent. Note that this set is always linearly dependent if $nM>P$. Thus, we shall assume that $nM\leq P$. We therefore need the function to be overparametrized with respect to the training sample size.

If this set is linearly dependent at $t=0$, then all sub-square matrices of $\left(\qty(\pdv{F_0}{\theta(0)})_1\ldots\qty(\pdv{F_0}{\theta(0)})_{nM}\right)$ $\in\R^{P\times nM}$ will be singular. Note that the determinants of these matrices are piecewise polynomials in the parameters and the dataset and $\relu$ is zero only at finitely many points, since it is a piecewise nonzero polynomial. Thus, the determinants are nonzero for Lebesgue-almost all values of the parameters $\theta(0)_1,\ldots,\theta(0)_P$ and the datasets $X_1,\ldots,X_n$ \citep{caron2005}. Let $\theta_\degen$ denote the set of those parameters for which the determinants are zero, union the set of degenerate parameters from the proof of lemma \ref{lemma:GF_welldefined}. Due to the continuous distribution of initial parameters and input data, we conclude that $\p(\theta(0)\in\theta_\degen)=0$. 

We shall now show that $\p(\theta(t)\in \theta_\degen)=0$ for all $t>0$. From lemma \ref{lemma:GF_welldefined}, we know that the GF solution flow is a piecewise diffeomorphism over the initial conditions space minus a zero measure set. Thus, it maps zero probability sets to zero probability sets (see theorem \ref{thm:zeromeas_to_zeromeas}). Now, we have shown that with respect to initialization and input data, $\p(\theta_\degen)=0$. Thus, from above arguments, $\p(\Phi_{-t}(\theta_\degen))=0$, which implies that $\p(\theta(0)|\ \Phi_t(\theta(0))\in \theta_\degen)=\p(\theta(t)\in \theta_\degen)=0$.
\end{proof}

\begin{remark}
  An analogous proof holds for matrix-valued inputs/outputs by flattening the input/output matrix into a vector. We omit the proof for that case for the sake of notational simplicity.
\end{remark}

\begin{corollary}[Linear convergence of specific architectures]\label{corr:arch}
  Conclusion of theorem \ref{thm:main} holds for the following forms of $f$ (as defined in equation \ref{eqn:function_f}):
  \begin{enumerate}
    \item (DNN) $f:\R^N\times\R^P\to\R^M$ with $g_i:\R^{N_{i-1}}\times\R^P\to\R^{N_i}$, defined by $g_i(X,\theta)=W_iX+B_i$ for all $X\in\R^{N_{i-1}}$ with $W_i\in\R^{N_i\times N_{i-1}}, B_i\in\R^{N_i}$, for all $i\in[L]$, $N_0=N, N_L=M$ and $\theta\in\R^P$ denoting the vector of all weights and biases. Over-parametrization requires $P=\sum_{i=1}^L N_i(N_{i-1}+1)\geq nM$.
    \item (ResNet) $f:\R^N\times\R^P\to\R^N$ with $g_i:\R^N\times\R^P\to\R^N$, defined by $g_i(X,\theta)=X+W_iX+B_i$ for all $X\in\R^N$ with $W_i\in\R^{N\times N}, B_i\in\R^N$, for all $i\in[L]$ and $\theta\in\R^P$ denoting the vector of all weights and biases. Over-parametrization requires $P=L(N^2+N)\geq nN$.
    \item (GCN) $f:\R^{m\times N}\times\R^P\to\R^{m\times M}$ with $g_i:\R^{m\times N_{i-1}}\times\R^P\to\R^{m\times N_i}$, defined by $g_i(X,\theta)=\hat{D}^{1/2}\hat{A}\hat{D}^{1/2}XW_i+B_i$  for all $X\in\R^{m\times N_{i-1}}$ with $W_i\in\R^{N_{i-1}\times N_i}, B_i\in\R^{m\times N_i}$, for all $i\in[L]$, $N_0=N, N_L=M$, $g_L:\R^{m\times N_{L-1}}\to\R^m$ and $\theta\in\R^P$ denoting the vector of all weights and biases. $\hat{A}=A+\mathbb{I}_m$, $\hat{D}=\text{diag}\qty(\sum_{j=1}^m \hat{A}_{1j},\ldots,\sum_{j=1}^m \hat{A}_{mj})$ where $A$ is the adjacency matrix of the graph over $m$ vertices. Over-parametrization requires $P=\sum_{i=1}^L N_i(N_{i-1}+m)\geq nmM$.
  \end{enumerate}
  In each architecture, $\relu$ can be any piecewise nonzero polynomial activation like leaky ReLU or parametric ReLU, or it can be the commonly used ReLU or sigmoid.
\end{corollary}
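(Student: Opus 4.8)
The plan is to recognize each listed architecture as a special case of the function class (\ref{eqn:function_f}), a composition of polynomial layers interleaved with a piecewise nonzero polynomial activation (or ReLU/sigmoid), and then to invoke Theorem \ref{thm:main} directly, using Corollary \ref{corr:relu} or Corollary \ref{corr:sigmoid} for the ReLU and sigmoid cases. So for each architecture I would verify three things: that every layer map $g_i$ is, coordinate by coordinate, a polynomial jointly in the entries of its input and in the parameter vector $\theta$; that the parameter count $P$ written in the statement is literally the quantity appearing in the hypothesis $P\geq nM$ of the theorem (with $M$, $n$ replaced by the appropriate output dimension and sample size); and, for the matrix-input architectures, that the flattening remark following Lemma \ref{lemma:pos_def} reduces us to the vector-valued setting already covered.

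For the DNN, the $k$-th coordinate of $g_i(X,\theta)=W_iX+B_i$ is $\sum_\ell (W_i)_{k\ell}X_\ell+(B_i)_k$, affine in $X$ and in $(W_i,B_i)$, hence polynomial in $(X,\theta)$ once all weights and biases are collected into $\theta$; this gives $P=\sum_{i=1}^L N_i(N_{i-1}+1)$ and the hypothesis reads $P\geq nM$. The ResNet is the identical computation after writing $g_i(X,\theta)=(\mathbb{I}+W_i)X+B_i$, still affine in $(X,\theta)$, with $M=N$ and $P=L(N^2+N)$. For the GCN, the point is that $\hat D^{1/2}\hat A\hat D^{1/2}$ is a fixed real $m\times m$ matrix determined by the fixed graph, so each entry of $\hat D^{1/2}\hat A\hat D^{1/2}XW_i+B_i$ is a bilinear form in $X$ and $W_i$ composed with a constant linear map plus a linear term in $B_i$, again polynomial; flattening the $m\times N_{i-1}$ feature matrices into vectors of length $mN_{i-1}$ places it in the form (\ref{eqn:function_f}) with effective sample size $nm$ and output dimension $mM$ (or $m$, with the last-layer convention as written), so $P=\sum_{i=1}^L N_i(N_{i-1}+m)\geq nmM$ is exactly $P\geq nM$ after this relabelling. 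In every case the interleaved nonlinearity is permitted to be any continuous piecewise nonzero polynomial, which includes leaky ReLU (each of its two linear pieces vanishing only at the origin), and otherwise is ReLU or sigmoid, handled by the two preceding corollaries.

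The only step I expect to need genuine care is the GCN. One must be explicit that the graph, hence the normalization matrix $\hat D^{1/2}\hat A\hat D^{1/2}$, is held fixed while it is the node-feature matrices that play the role of the absolutely continuously distributed inputs $X_1,\ldots,X_n$, so that after flattening they are drawn from an absolutely continuous law on $\R^{mN}$ and the determinant-nonvanishing argument of Lemma \ref{lemma:pos_def} applies unchanged; and that the matrix-valued output is similarly handled by the flattening remark, so that positive definiteness of the NTK is preserved along the flow. With those observations in place, the DNN and ResNet cases are immediate, and the conclusion of Theorem \ref{thm:main} transfers verbatim to all three architectures.
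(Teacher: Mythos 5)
Your proposal is correct and is exactly the verification the paper leaves implicit: the corollary is stated in the paper without any accompanying proof, the understanding being that one checks each layer map $g_i$ is polynomial jointly in its input and in $\theta$, matches the stated parameter count to the hypothesis $P\geq nM$ of Theorem~\ref{thm:main}, invokes the flattening remark after Lemma~\ref{lemma:pos_def} for the matrix-valued GCN case, and appeals to Corollary~\ref{corr:relu} or~\ref{corr:sigmoid} when the activation is ReLU or sigmoid. One small wording slip worth fixing: in the GCN case you write ``effective sample size $nm$ and output dimension $mM$,'' but these are two \emph{alternative} relabellings, not to be applied simultaneously — either keep $n$ samples with flattened output dimension $mM$, or view it as $nm$ node-level samples with per-node output dimension $M$; both give $P\geq nmM$, whereas stating both at once would literally double-count.
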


\section{Experiments}\label{sec:exp}
We use synthetic data to support our findings. For all architectures, data is generated uniformly from a ball in $\R^N$ with the $\R^M$-valued labels being generated normally. This is done in concordance to \cite{zhang_ICLR16-randomlabels}.
We use random synthetic data instead of real-life data to stress-test our theory. Demonstrating that models are capable of fitting to random noise under our over-parametrization scheme provides strong validation of our result. 

Leaky ReLU activation \citep{maas_ICML13-leaky} is used everywhere, with the initialization being the default Kaiming-He Uniform initialization \citep{he_ICCV15-prelu+init}. We perform `ablation' by varying the number of parameters from being less than $nM$ to slowly being overparametrized. The training curves verify the hypothesis that as we increase overparametrization, these architectures converge closer to linearly in training loss. 

The experimental setups are as follows. To simulate GF, a very small learning rate is taken (lr$=10^{-5}$) with a large number of epochs (10,000). Each model is trained over a training sample of size 1000. For DNN, the input dimension is 500 with output dimension 50 and the model is trained for varying hidden layer sizes - starting from two hidden layers of 50 neurons each ([50,50]) and ending with two hidden layers of 250 neurons each ([250,250]). For ResNet, the input and output dimensions are 100 and the model is trained for varying number of hidden layers - starting from 8 and ending with 16. We had to remove showing the first 100 epochs of ResNet because the initial error was very large, causing the graph to become incomprehensible - the exponential decay is nevertheless visible. For GCN, the input dimension is 500 with output dimension 50 and the model is trained for varying hidden layer sizes - starting from two hidden layers of 25 neurons each ([25,25]) and ending with two hidden layers of 200 neurons each ([200,200]). The underlying graph used is a $k$-nearest neighbours graph on the input data with $k=10$. 

\begin{figure}
    \centering 
    \begin{subfigure}[b]{0.32\textwidth}
        \includegraphics[width=\linewidth]{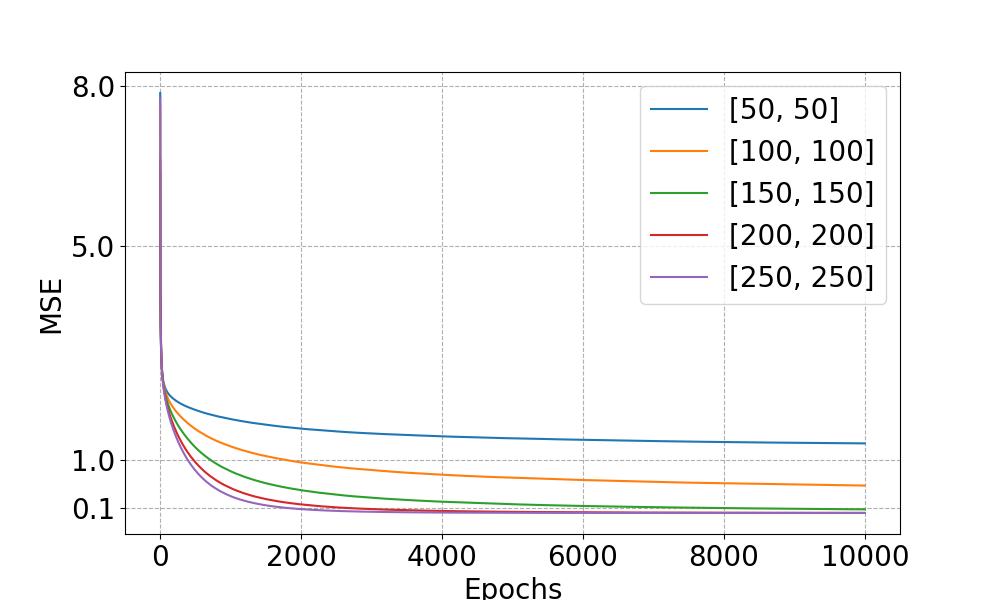}
        \caption{DNN error curves}
        \label{fig:dnn}
    \end{subfigure}
    \hfill 
    \begin{subfigure}[b]{0.32\textwidth}
        \includegraphics[width=\linewidth]{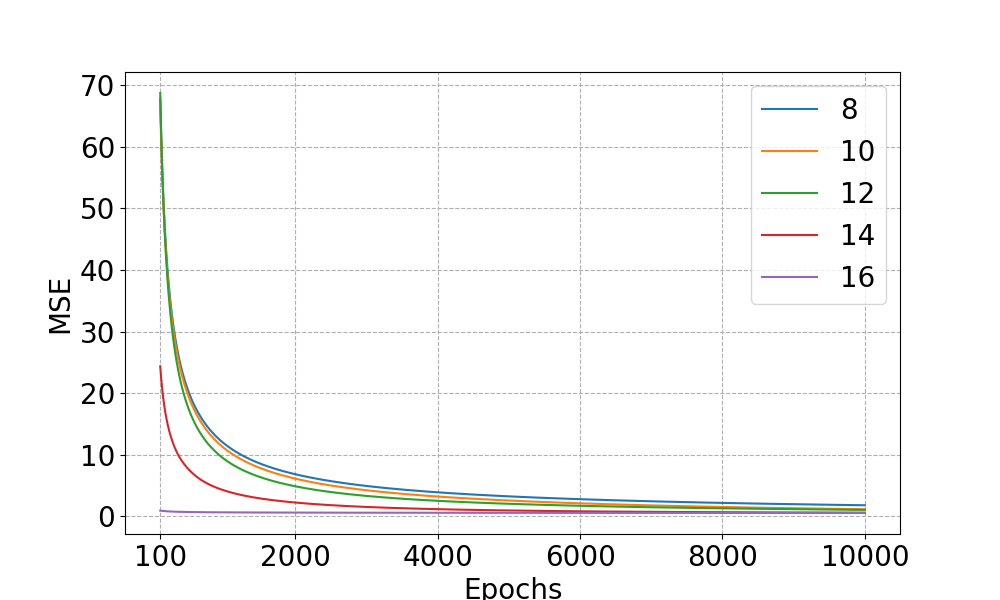}
        \caption{ResNet error curves}
        \label{fig:resnet}
    \end{subfigure}
    \hfill 
    \begin{subfigure}[b]{0.32\textwidth}
        \includegraphics[width=\linewidth]{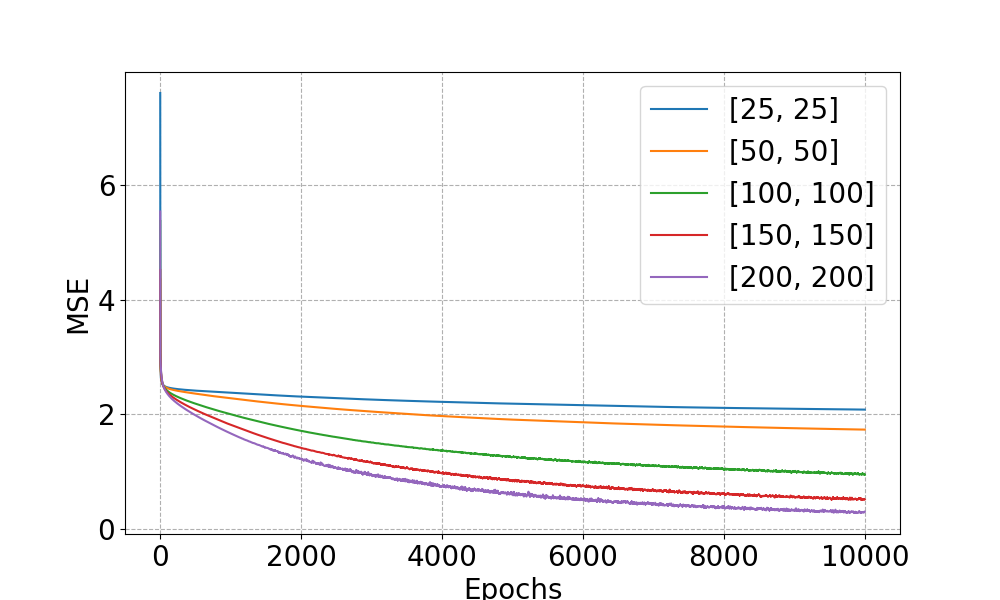}
        \caption{GCN error curves}
        \label{fig:gcn}
    \end{subfigure}
    \caption{Results on synthetic data.}
    \label{fig:exps}
\end{figure}

\section{Conclusion and Future Directions}\label{sec:conc}
This paper studies linear convergence of gradient flow to a global minimum almost surely (with respect to initial weights and input distributions) for any function that can be written as a composition of polynomials and piecewise nonzero polynomials, given that the function is sufficiently overparametrized ($P\geq nM$). In particular, we prove linear convergence of GF to a global minimum almost surely for a large variety of neural networks (like DNNs, GCNs, ResNets, etc.) with activations like leaky ReLU, parametric ReLU, ReLU and sigmoid, given that the function is sufficiently overparametrized ($P\geq nM$).

We prove our main result by exploiting properties of the NTK and how its lowest eigenvalue behaves under the operation of the GF ODE. We extend to ReLU and sigmoid by approximating them uniformly with piecewise nonzero polynomials. We shall now present some future directions.

Future works can try to extend this paper's results more by doing a comparative study of architectures on a case-by-case basis. For example, it has been shown by \cite{shi2024} that transformers need lesser parameters than DNNs to approximate hierarchial composition models (composed of Korobov/Barron functions), for the same error. Showing that transformers have better convergence properties (in the sense of either weaker assumptions or stronger conclusions) would be an interesting line of work. Moreover, we rely on the loss being the squared loss. This is the most common loss used for regression, but not so much for classification. Future works can try extending this result to other losses like cross-entropy.

\bibliographystyle{plainnat}  
\bibliography{references}  

\appendix
\section{Auxillary Results}
\begin{theorem}\label{thm:GF_misses_codim2}
  The solution curve of the GF equation \ref{eqn:GF} reaches a subspace $S$ of codimension $k\geq 2$ for almost no initializations.
\end{theorem}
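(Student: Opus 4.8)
The plan is to exhibit the set of ``bad'' initializations as the image, under a piecewise $C^1$ map, of a manifold of dimension at most $P-1$, and then to invoke the elementary fact that such images are Lebesgue null in $\R^P$.

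First I would set up the flow as a joint map. On the interior of each polynomial piece $R\subseteq\R^P$ — a region on which the vector field $-\nabla L$ of (\ref{eqn:GF}) coincides with a fixed polynomial — the map $(t,\theta_0)\mapsto\Phi_t(\theta_0)$ is $C^\infty$ jointly in $(t,\theta_0)$ on its open domain, by smooth dependence of solutions of a polynomial ODE on time and initial data, and for each fixed $t$ the partial map $\theta_0\mapsto\Phi_t(\theta_0)$ is a local diffeomorphism (invertible Jacobian, exactly as in Lemma \ref{lemma:GF_welldefined}), so this joint map is a \emph{submersion} onto $\R^P$. Next I would glue these maps across pieces: since trajectories of a polynomial (hence analytic) vector field are analytic in $t$, a nonzero defining polynomial of a piece boundary vanishes only at isolated times along any trajectory, so within $[0,T]$ each trajectory crosses piece boundaries only finitely often, the crossing times depend $C^1$ on $\theta_0$ wherever the crossing is transversal, and $[0,T]\times\R^P$ decomposes into countably many regions on each of which $\Theta(t,\theta_0):=\Phi_t(\theta_0)$ is a finite composition of single-piece flow maps. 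Hence $\Theta$ is piecewise $C^1$ and remains a submersion wherever it is differentiable.

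With this in hand the rest is short. Fix the codimension-$k$ subspace $S$ with $k\ge 2$ and put $A:=\{\theta_0:\Phi_t(\theta_0)\in S\text{ for some }t\in[0,T]\}$. On each region of the decomposition, $\Theta^{-1}(S)$ is the preimage of a submanifold under a submersion, hence a $C^1$ submanifold of codimension $k$ in $[0,T]\times\R^P$, i.e. of dimension $(P+1)-k\le P-1$. Since $A=\pi(\Theta^{-1}(S))$ with $\pi(t,\theta_0)=\theta_0$ the (smooth) projection, $A$ is a countable union of images of manifolds of dimension $\le P-1$ under $C^1$ maps into $\R^P$; and a $C^1$ map from an $m$-manifold with $m<P$ into $\R^P$ has Lebesgue-null image (cover by countably many charts, each restriction is locally Lipschitz, and Lipschitz maps send the null set $\R^m\times\{0\}\subseteq\R^P$ to a null set). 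Thus $A$ is Lebesgue null, which is the claim; taking a further countable union over $T\in\N$ (using no finite-time blowup, Theorem \ref{thm:no_blowup}) gives it for all $t\ge 0$.

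The conceptual core — flowing a codimension-$\ge 2$ set along a one-parameter family of times sweeps out only dimension $\le P-1$, hence a null set — is immediate; the main obstacle is the bookkeeping around the piecewise structure: checking that trajectories meet piece boundaries at isolated times so the decomposition of $[0,T]\times\R^P$ is well defined and at most countable, that the crossing-time functions are $C^1$ after discarding the lower-dimensional set of non-transversal crossings, and that $\Theta$ genuinely remains $C^1$ and a submersion across the re-initializations at boundaries. I would also remark that linearity of $S$ is never used: the argument applies verbatim to any $C^1$ (or semialgebraic) subset of $\R^P$ of dimension at most $P-2$, which is the form in which the result is invoked in Lemma \ref{lemma:GF_welldefined}.
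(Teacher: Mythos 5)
Your argument is correct in spirit but takes a genuinely different route from the paper's. The paper covers the time interval $[0,T]$ by a finite grid $Q_r$ of spacing $r/L$ (where $L$ is the time-Lipschitz constant of the flow), shows that $A\subseteq\bigcup_{q\in Q_r}\Phi_q^{-1}(N_r(S))$, and then multiplies the $\mathcal{O}(1/r)$ grid points by the tubular-neighbourhood volume $\lambda^P(N_r(S))=\mathcal{O}(r^k)$ (Weyl's tube formula) to get $\lambda^P(A)=\mathcal{O}(r^{k-1})\to 0$. You instead work directly in the extended space $[0,T]\times\R^P$: the joint flow $\Theta$ is a piecewise $C^1$ submersion onto $\R^P$, so $\Theta^{-1}(S)$ is a countable union of $C^1$ submanifolds of dimension $(P+1)-k\le P-1$, and $A=\pi(\Theta^{-1}(S))$ is then a countable union of $C^1$ images of manifolds of dimension below $P$, hence Lebesgue-null. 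The trade is that you substitute the preimage theorem and the elementary ``Lipschitz image of a lower-dimensional set is null'' fact for the paper's covering-plus-tube-formula estimate; your route avoids Weyl's formula and the need for a uniform Jacobian bound $J$ over an unbounded domain (which the paper asserts without justification), at the cost of more bookkeeping around the piecewise decomposition of $[0,T]\times\R^P$. Both arguments sit at roughly the same level of rigour regarding the piecewise structure --- the paper's finite-itinerary/transversality issues are implicit in its assumption that $\Phi_q$ is a.e.\ $C^1$ with controlled Jacobian, while you spell them out as remaining gaps --- and both use $k\ge 2$ in exactly the same place (the paper to make $r^{k-1}\to 0$, you to make $(P+1)-k<P$). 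Your closing remark that linearity of $S$ is irrelevant is correct, though the paper's proof already has this feature since Weyl's formula applies to general submanifolds; it is still worth stating explicitly, since Lemma~\ref{lemma:GF_welldefined} actually invokes this theorem for boundary intersections that are not linear.

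One point to tighten if you wanted to make this fully rigorous: removing the measure-zero set of boundary-crossing events from $[0,T]\times\R^P$ is not enough on its own, since a null set in the product space can project onto a full-measure set in $\R^P$. You need to argue separately that the portion of $\Theta^{-1}(S)$ lying over crossing times has dimension at most $P-1$ (e.g.\ by intersecting $S$ with the one-sided flow image of the piece boundaries and applying the same submersion count, now with codimension $k+1$ under a transversality assumption), or restrict to transversal crossings and invoke Theorem~\ref{thm:GF_misses_codim2} inductively for the non-transversal set, which is again of higher codimension.
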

\begin{proof}
  Define the set of bad initializations as
  \begin{equation}
    A:=\{ \theta_0\in\R^P:\ \exists t\in[0,T] \text{ such that } \Phi_t(\theta_0)\in S \}
  \end{equation}
  We shall show that $\lambda^P(A)=0$. Define the earliest time that the curve hits $S$, for $\theta\in A$ by
  \begin{equation}
    t_0(\theta):=\inf\{t>0:\ \Phi_t(\theta)\in S \}
  \end{equation}
  Define the $r$-neighbourhood around $S$ for any $r>0$ by
  \begin{equation}
    N_r(S):=\{\theta\in\R^P:\ ||x-S||<r \}
  \end{equation}
  Since the flow is Lipschitz continuous in $t$, $||\Phi_t(\theta_0)-\Phi_q(\theta_0)||\leq L|t-q|$ for all $t,q\in[0,T]$. Choose a finite rational grid $Q_r$ over $[0,T]$ with spacing $\tau\leq r/L$. Note that we cannot choose a finer spacing as that will allow $\Phi_q(\theta)\notin N_r(S)$, which will cause a contradiction later on in the proof.

  Thus, number of points in $Q_r$ is $|Q_r|\leq TL/r$. Thus, for all $t\in[0,T]$, we can find $q\in Q_r$ such that $||\Phi_t(\theta_0)-\Phi_q(\theta_0)||\leq r$. 
  
  Furthermore, since $\Phi_q$ is $C^1$ almost everywhere, $\lambda^P(\Phi_q^{-1}(N_r(S)))=\int_{N_r(S)}|\det D\Phi^{-1}_q(y)|dy\leq J\lambda^P(N_r(S))$ for some constant $J>0$. So, putting all of the above results together,
  \begin{align}
    \lambda^P(A) &\leq \sum_{q\in Q_r} \lambda^P(\Phi_q^{-1}(N_r(S))) \\
         &\leq |Q_r| J\lambda^P(N_r(S)) \\
         &=\Oh(r^{k-1})
  \end{align}
  where last line follows from the fact that volume of $N_r(S)$ is $\Oh(r^k)$ as $r\to0$, which follows from Weyl's tube formula (see \cite{gray2003}). Finally, taking $r\to0$, we get our conclusion.
\end{proof}

\begin{theorem}[Jackson's inequality]\label{thm:jackson}
  Let $k\in\N$ and $a<b$. Any $k$-times continuously differentiable function $f:[a,b]\to\R$ can be uniformly approximated by a sequence of polynomials $(p_n)$ such that 
  \begin{equation}
    ||f-p_n||_\infty\leq\frac{\pi}{2}\frac{|f^{(k)}|}{(n+1)^k}
  \end{equation}
\end{theorem}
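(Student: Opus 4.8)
The plan is to reduce to the classical Jackson--Favard theorem for $2\pi$-periodic functions and transfer the resulting approximant back to $[a,b]$. First I would affinely rescale to the interval $[-1,1]$ (the $k$-th derivative merely picks up the fixed factor $((b-a)/2)^k$, which is absorbed since the interval is fixed in every application in this paper). Then I would substitute $x=\cos\varphi$ and put $g(\varphi):=f(\cos\varphi)$, an even, $2\pi$-periodic, $C^k$ function; by the chain rule $\|g^{(k)}\|_\infty$ is controlled by $\|f^{(k)}\|_\infty$ (cleanly so for $k=1$, since $g'(\varphi)=-f'(\cos\varphi)\sin\varphi$, and up to lower-order derivative terms in general). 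The payoff of the substitution is that even trigonometric polynomials of degree $\le n$ in $\varphi$ are exactly the functions $p_n(\cos\varphi)$ with $p_n$ an algebraic polynomial of degree $\le n$ (since $\cos j\varphi=T_j(\cos\varphi)$), so a good even trigonometric approximant of $g$ transfers with the same sup-norm error to an algebraic approximant of $f$ on $[-1,1]$.

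The heart of the matter is the periodic estimate, which I would obtain by convolution. Take an even, nonnegative trigonometric-polynomial kernel $K_n$ of degree $\le n$ --- a sufficiently high even power of the Dirichlet kernel (Fej\'er/Jackson type), the power depending on $k$ --- normalized so that $\int K_n=1$ and with moments $\int |u|^{j}K_n(u)\,du=\Oh(n^{-j})$ for $0\le j\le k$. Writing $(K_n*g)(\varphi)-g(\varphi)=\int K_n(u)\bigl(g(\varphi-u)-g(\varphi)\bigr)\,du$ and Taylor-expanding $g$ about $\varphi$ to order $k$, the odd-order terms vanish by evenness of $K_n$ and the remainder is at most $\tfrac{1}{k!}\|g^{(k)}\|_\infty\int|u|^k K_n(u)\,du=\Oh(n^{-k}\|g^{(k)}\|_\infty)$. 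This already yields the theorem with a concrete universal constant in place of $\pi/2$, which is all that this paper needs; to get the sharp constant one convolves instead with the extremal Akhiezer--Krein--Favard kernels, whose approximation constants $\kappa_k$ satisfy $\sup_{k\ge1}\kappa_k=\kappa_1=\tfrac{4}{\pi}\sum_{j\ge0}(2j+1)^{-2}=\tfrac{\pi}{2}$, so the estimate holds with the single constant $\pi/2$ for every $k$.

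Finally $K_n*g$ is an even trigonometric polynomial of degree $\le n$, hence equal to $p_n(\cos\varphi)$ for an algebraic polynomial $p_n$ of degree $\le n$ with $\|f-p_n\|_{\infty,[-1,1]}=\|g-K_n*g\|_\infty$; undoing the affine step produces the sequence $(p_n)_{n\in\N}$ satisfying the stated bound. The main obstacle is the kernel construction in the middle step --- choosing the power of the kernel so that its $k$-th moment is genuinely $\Oh(n^{-k})$, and, if one insists on the precise constant $\pi/2$, identifying the Favard extremal kernels and performing the evaluation $\tfrac{4}{\pi}\sum_{j\ge0}(2j+1)^{-2}=\pi/2$; everything else (the affine reduction, the cosine substitution, and the Taylor remainder bound) is routine, and a self-contained argument with an unspecified universal constant avoids the extremal-kernel theory entirely.
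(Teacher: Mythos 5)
The paper gives no proof here at all---it simply cites Cheney's \emph{Introduction to Approximation Theory}---so you are supplying an argument where the paper supplies a pointer, and your chosen route (affine rescaling, the cosine substitution $x=\cos\varphi$, a trigonometric Jackson estimate by convolution, transfer back via Chebyshev polynomials) is indeed the standard one. For the single use the paper actually makes of this theorem---$k=1$, sigmoid on a fixed compact interval, any universal constant in place of $\pi/2$ being acceptable---your sketch is adequate: $g'(\varphi)=-f'(\cos\varphi)\sin\varphi$ gives $\|g'\|_\infty\leq\|f'\|_\infty$ cleanly, and a positive even kernel with $\int|u|K_n(u)\,du=\Oh(n^{-1})$ finishes.

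For general $k$, though, the Taylor-plus-parity step has a real gap that you did not flag. After evenness kills the odd powers of $u$, the $u^2$ term survives with coefficient $\tfrac12 g''(\varphi)$, and $\int u^2K_n(u)\,du=\Oh(n^{-2})$ dominates $\Oh(n^{-k})$ whenever $k\geq 3$. This is the saturation phenomenon for positive convolution operators: no positive kernel can beat $\Oh(n^{-2})$, however smooth $g$ is, so ``raising the power of the Dirichlet kernel'' cannot produce the stated rate. The standard repair is not a cleverer positive kernel but an induction on $k$ (apply the $k=1$ bound to $g^{(k-1)}$, integrate the approximant $k-1$ times, adjusting constants of integration), or Favard's duality/extremal-kernel argument, which does not fit the positive-kernel-plus-Taylor mold. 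You also correctly flag, but do not resolve, that for $k\geq2$ the chain rule mixes all of $f',\dots,f^{(k)}$ into $g^{(k)}$, so the bound you would actually obtain involves $\|f\|_{C^k}$ rather than $\|f^{(k)}\|_\infty$, and that the affine step contributes a factor $((b-a)/2)^k$. Both of these are in fact already missing from the theorem as the paper states it, so as written the paper's version is not literally correct for general $[a,b]$ and $k\geq 2$; this happens not to matter downstream because Corollary \ref{corr:sigmoid} only invokes the case $k=1$ on a fixed $[-M,M]$ with a wide margin.
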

\begin{proof}
  See \cite{cheney1998}.
\end{proof}

\begin{theorem}\label{thm:no_blowup}
  Solution of the GF equation \ref{eqn:GF} does not blow up in finite time, ie. $||\theta(s)||<\infty$ for all $s\in(0,\infty)$.
\end{theorem}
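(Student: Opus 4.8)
The plan is to show that the gradient-flow trajectory has finite arc length on every bounded time interval, which immediately precludes finite-time blow-up. The engine is the standard energy (Lyapunov) identity for gradient flow combined with the fact that $L\geq 0$.

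First I would record that the loss decreases along the flow. On each polynomial piece of the trajectory $-\nabla L$ is smooth, so the chain rule gives $\dv{t}L(\theta(t)) = \nabla L(\theta(t))^T\dot{\theta}(t) = -\norm{\nabla L(\theta(t))}^2 \leq 0$. Since $\theta(t)$ and hence $L(\theta(t))$ are continuous everywhere (Lemma \ref{lemma:GF_welldefined}), integrating this identity across the finitely many pieces traversed during $[0,t]$ and telescoping yields
\begin{equation}
  \int_0^t \norm{\dot{\theta}(s)}^2\,ds \;=\; L(\theta_0) - L(\theta(t)) \;\leq\; L(\theta_0),\nonumber
\end{equation}
where the last step uses $L(\theta(t))\geq 0$. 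So the trajectory has bounded ``energy'' on every finite interval, uniformly in $t$.

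Next I would bound the arc length. By Cauchy--Schwarz, $\int_0^t \norm{\dot{\theta}(s)}\,ds \leq \sqrt{t}\,\big(\int_0^t \norm{\dot{\theta}(s)}^2\,ds\big)^{1/2} \leq \sqrt{t\,L(\theta_0)}$, hence $\norm{\theta(t)} \leq \norm{\theta_0} + \sqrt{t\,L(\theta_0)}$ for every $t$ in the maximal interval of existence. If that interval were a bounded $[0,T^*)$, this estimate would keep $\theta$ inside a fixed ball as $t\to T^*$, contradicting the standard continuation theorem for ODEs already invoked in Lemma \ref{lemma:GF_welldefined}; therefore the solution extends to all of $(0,\infty)$ and $\norm{\theta(s)}<\infty$ there.

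The only point needing care is that the solution is merely piecewise differentiable, so the energy identity must be applied piece by piece and glued via continuity of $L$ along the curve (equivalently, noting that $\theta$ is locally absolutely continuous since $\dot\theta$ is locally bounded, so the fundamental theorem of calculus applies). I would also remark that this is not circular with Lemma \ref{lemma:GF_welldefined}: the argument uses only local existence on the maximal interval, not the global extension that Lemma \ref{lemma:GF_welldefined} draws from it. I expect this bookkeeping to be the only mildly technical part; the analytic core is the two-line energy-plus-Cauchy--Schwarz estimate.
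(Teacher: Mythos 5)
Your proof is correct and takes essentially the same route as the paper: the energy identity $\dv{t}L(\theta(t)) = -\norm{\nabla L(\theta(t))}^2$, integration to bound $\int_0^s\norm{\nabla L}^2\,dt$ by $L(\theta_0)$, and Cauchy--Schwarz to convert this into an arc-length bound $\norm{\theta(s)-\theta(0)}^2 \leq s\,L(\theta_0)$. Your added remarks on the piecewise gluing and on avoiding circularity with Lemma~\ref{lemma:GF_welldefined} are reasonable bookkeeping that the paper leaves implicit, but the analytic core is identical.
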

\begin{proof}
  Note that 
  \begin{equation}
    \dv{L(\theta(t))}{t} = \nabla L(\theta(t))^T\qty(\dv{\theta(t)}{t}) = -||\nabla L(\theta(t))||^2
  \end{equation}
  Thus, 
  \begin{equation}
    \int_0^s ||\nabla L(\theta(t))||^2dt = -\int_0^s \dv{L(\theta(t))}{t} dt = L(\theta(0)) - L(\theta(s)) \leq L(\theta(0)) < \infty
  \end{equation}
  Now,
  \begin{equation}
    ||\theta(s)-\theta(0)|| = \left|\left|\int_0^s \dv{\theta(t)}{t}dt\right|\right| = \left|\left|\int_0^s \nabla L(\theta(t)) dt\right|\right| \leq \int_0^s ||\nabla L(\theta(t))|| dt
  \end{equation}
  Using Cauchy-Schwartz,
  \begin{equation}
    \qty(\int_0^s 1\cdot||\nabla L(\theta(t))|| dt)^2 \leq s\int_0^s ||\nabla L(\theta(t))||^2dt
  \end{equation}
  Thus,
  \begin{equation}
    ||\theta(s)-\theta(0)||^2 \leq s\int_0^s ||\nabla L(\theta(t))||^2dt<\infty 
  \end{equation}
  Thus, $||\theta(s)||<\infty$ for all $s>0$.
\end{proof}

\begin{theorem}\label{thm:zeromeas_to_zeromeas}
  Let $f:\R^P\to\R^P$ be a piecewise differentiable function (with pieces $E_1,\ldots,E_p$ such that $\cup_{i=1}^p E_i=\R^P$). Let $A\subset\R^P$ such that $\lambda^P(A)=0$. Then, $\lambda^P(f(A))=0$.
\end{theorem}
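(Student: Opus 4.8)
The plan is to reduce to a single piece and then invoke the classical fact that a locally Lipschitz map sends Lebesgue-null sets to Lebesgue-null sets. First I would write $f(A)=\bigcup_{i=1}^{p} f(A\cap E_i)$; since a finite union of null sets is null, it suffices to prove $\lambda^P(f(A\cap E_i))=0$ for each fixed $i$. Fix such an $i$. On the piece $E_i$ the map $f$ is differentiable, and in every architecture we consider it is the restriction of a $C^\infty$ map (a composition of polynomials between the breakpoints of $\relu$), hence locally Lipschitz on $E_i$.

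Next I would localize. Cover $\R^P$ by the countable family $\{B_k\}_{k\in\N}$ of closed unit balls centered at lattice points. On each $B_k$ the restriction $f|_{B_k\cap E_i}$ is Lipschitz with some constant $L_k<\infty$, since a $C^1$ map has bounded derivative on the compact set $B_k\cap\overline{E_i}$ and is therefore Lipschitz there by the mean value inequality. Put $A_k:=A\cap E_i\cap B_k$; as a subset of the null set $A$ it satisfies $\lambda^P(A_k)=0$. Then I carry out the standard covering estimate: given $\varepsilon>0$, cover $A_k$ by cubes $\{Q_j\}$ with $\sum_j\lambda^P(Q_j)<\varepsilon$. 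The image $f(Q_j\cap E_i)$ has diameter at most $L_k\,\mathrm{diam}(Q_j)$, so it lies in a cube of side $L_k\sqrt{P}\,\mathrm{side}(Q_j)$, of volume $(L_k\sqrt{P})^P\lambda^P(Q_j)$. Summing gives $\lambda^P(f(A_k))\le (L_k\sqrt{P})^P\sum_j\lambda^P(Q_j)<(L_k\sqrt{P})^P\varepsilon$, and letting $\varepsilon\to0$ yields $\lambda^P(f(A_k))=0$. Taking the countable union over $k$ gives $\lambda^P(f(A\cap E_i))=0$, and the finite union over $i$ completes the argument. If one wishes to assume only that $f$ is differentiable (not $C^1$) on each piece, the same estimate goes through after first splitting $A\cap E_i$ into the countably many subsets on which $\|Df\|\le m$ and the linear-approximation error is controlled at scale $1/m$ — this is exactly the classical lemma underlying Sard's theorem.

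The main obstacle is bookkeeping rather than genuine difficulty: one must make precise what "differentiable on $E_i$" means when $E_i$ is not open, and check that the Lipschitz constant is finite on each bounded portion. In our setting this is automatic, since each piece is a closed semialgebraic region on which $f$ coincides with a polynomial map, so no pathology can occur. The one point that is genuinely used is finiteness of the number of pieces $p$: an infinite union of null sets need not be null, but here every infinite union taken (over the balls $B_k$) sits inside a fixed piece and is merely countable, while the union over pieces themselves is finite.
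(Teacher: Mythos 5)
Your proof is correct and follows essentially the same route as the paper: decompose $A$ over the finitely many pieces $E_i$ and reduce to the fact that a differentiable map restricted to one piece sends Lebesgue-null sets to Lebesgue-null sets. The only difference is that the paper simply cites this single-piece fact from Rudin, whereas you reprove it via the standard local Lipschitz covering estimate (with a remark on the Sard-type refinement when only pointwise differentiability is assumed).
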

\begin{proof}
  We know that $f|_{E_i}$ will map zero measure sets to zero measure sets (see \cite{rudin2006}). Note that $A=A\cap\R^P=A\cap(\cup_{i=1}^p E_i)=\cup_{i=1}^p (A\cap E_i)$. Thus, $\lambda^P(f(A))\leq \sum_{i=1}^p \lambda^P(f(A\cap E_i))=0$.
\end{proof}

\end{document}